\newtheorem{definition}{Definition}
\newtheorem{proposition}{Proposition}
\newtheorem{theorem}{Theorem}
\newtheorem{lemma}{Lemma}
\newtheorem{corollary}{Corollary}
\theoremstyle{definition}
\newif\ifshowappendix
\begin{document}

%

%

\newcommand{\C}{\mathcal{C}}
\newcommand{\Ni}{N_{\textrm{init}}}
\newcommand{\Ch}{\hat{\C}}
\newcommand{\Cm}{\mathbf{C}}
\newcommand{\sH}{\mathcal{H}}
\newcommand{\vu}{\mathbf{u}}
\newcommand{\x}{\mathbf{x}}
\newcommand{\vw}{\mathbf{w}}
\newcommand{\G}{\mathbf{G}}
\newcommand{\U}{\mathbf{U}}
\newcommand{\V}{\mathbf{V}}

\newcommand{\Hcal}{\mathcal H}
\newcommand{\Acal}{\mathcal A}
\newcommand{\ip}[2]{\langle #1,#2\rangle}
\newcommand{\Proj}{P}
\newcommand{\E}{\mathbb E}

\newcommand{\norm}[1]{\left\|#1\right\|}
\newcommand{\op}{\mathrm{op}}
\newcommand{\HS}{\mathcal{S}_2} 
\newcommand{\tr}{\mathrm{trace}}
\newcommand{\Kcal}{\mathcal K}
\newcommand{\dom}{\mathcal D}
\newcommand{\dx}{\mathrm{d}x}

\newcommand{\todo}[1]{{\color{red} TODO: #1}}
\newcommand{\nathan}[1]{{\color{orange} Nathan says: #1}}
\newcommand{\poorbita}[1]{{\color{blue} Poorbita says: #1}}

\twocolumn[

\aistatstitle{Active Subspaces in Infinite Dimension}

\aistatsauthor{ Poorbita Kundu \And Nathan Wycoff}

\aistatsaddress{ 
Public Health Sciences Division \\
Fred Hutchinson Cancer Center
\And  
Department of Mathematics and Statistics \\ University of Massachusetts Amherst} ]

\begin{abstract}
Active subspace analysis uses the leading eigenspace of the gradient's second moment to conduct supervised dimension reduction.
In this article, we extend this methodology to real-valued functionals on Hilbert space.
We define an operator which coincides with the active subspace matrix when applied to a Euclidean space.
We show that many of the desirable properties of Active Subspace analysis extend directly to the infinite dimensional setting.
We also propose a Monte Carlo procedure and discuss its convergence properties.
Finally, we deploy this methodology to create visualizations and improve modeling and optimization on complex test problems.
\end{abstract}

\section{Introduction}

The increasing availability of computational resources has allowed for significantly more sophisticated models in science and engineering.
But this increase in complexity is not always accompanied by a commensurate increase in understanding.
Often, these models are parameterized by a set of variables governing the behavior of the simulation.
When there are many such parameters, a \textit{sensitivity analysis} \citep{razavi2021future} can be key to unlocking knowledge of which are most important in terms of driving the system. 
A linear sensitivity analysis in particular looks for important linear combinations of variables; this can advance understanding of even complex systems where all variables play an important role.

\textit{Gradient-based sensitivity analysis} \citep{peter2010numerical} uses the derivative of a target system's behavior with respect to its parameters to determine importance.
Active subspace analysis \citep{constantine2015active} is the most prominent linear gradient-based method, which given a differentiable function $f: \mathcal{X}\subseteq\mathbb{R}^D \to \mathbb{R}$,
proceeds by defining an \textit{active subspace matrix}:
\begin{equation}
    \Cm = \mathbb{E}_{\rho}
    \left[
    \nabla f(X) \nabla f(X)^\top
    \right] \in\mathbb{R}^{D\times D}\,,
\end{equation}
where $X$ is a $\mathcal X$-valued random variable with probability law $\rho$.
In conducting an eigenanalysis on $\Cm$, we have the opportunity to discover some subspace of $\mathbb{R}^D$ which captures most or even all of $f$'s variation.
For many practical functions, the gradient is restricted primarily to some lower dimensional subspace \citep{constantine2016many}.
The strongest such circumstance is that of a \textit{ridge function}, which has the form $f(x) = \tilde f(\mathbf{A}x)$ for some $\mathbf{A}\in\mathbb{R}^{R\times D}$ with $R<D$.

\begin{figure*}[h]
    \centering
    \newcommand\figscaleone{0.25}

    
    \begin{tabular}[c]{|c|c|c|c|}
    \hline
    &\texttt{poisson} & \texttt{laminar} & \texttt{kiri} \\
    \hline
    $u$ 
    &
    \includegraphics[width=\figscaleone\linewidth,valign=m]{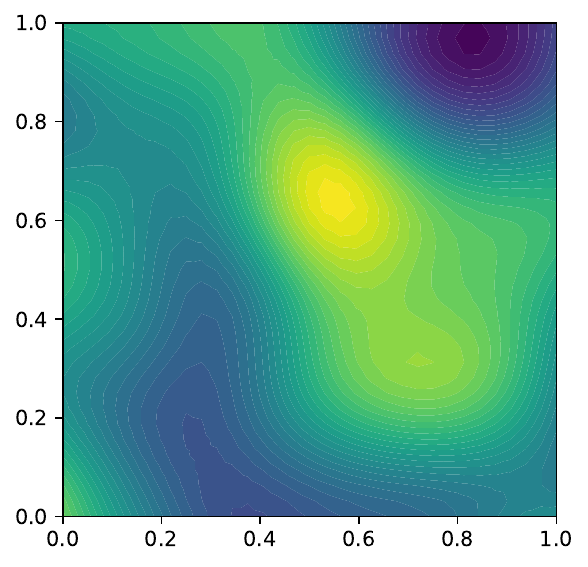}
    &
    \includegraphics[width=\figscaleone\linewidth,valign=m]{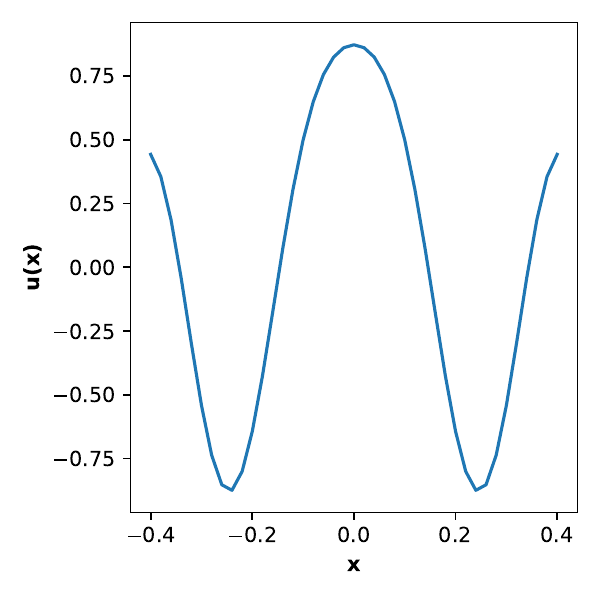}
    &
    \includegraphics[width=\figscaleone\linewidth,trim={4.5em 4.5em 4.5em 4.5em},clip,valign=m]{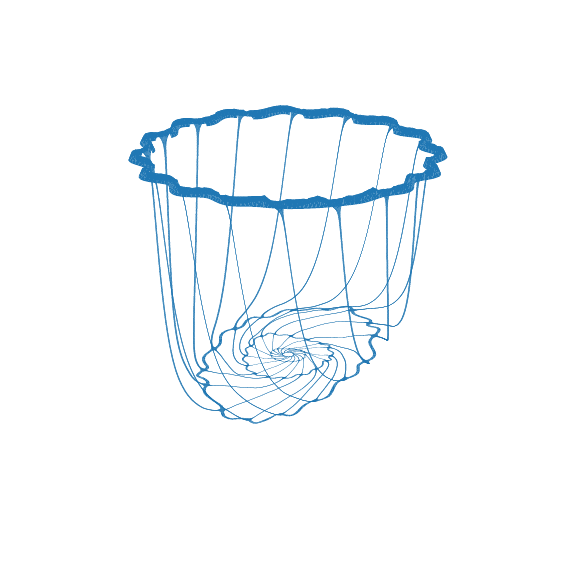} \\
    \hline
    $\nabla f(u)$ & 
    \includegraphics[width=\figscaleone\linewidth,valign=m]{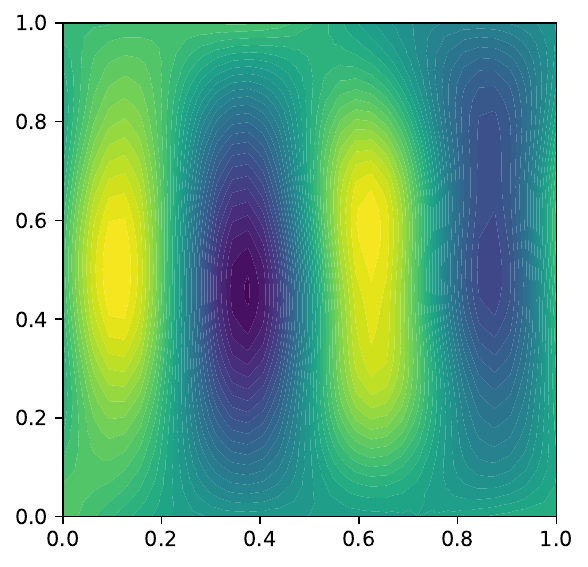}
    &
    \includegraphics[width=\figscaleone\linewidth,valign=m]{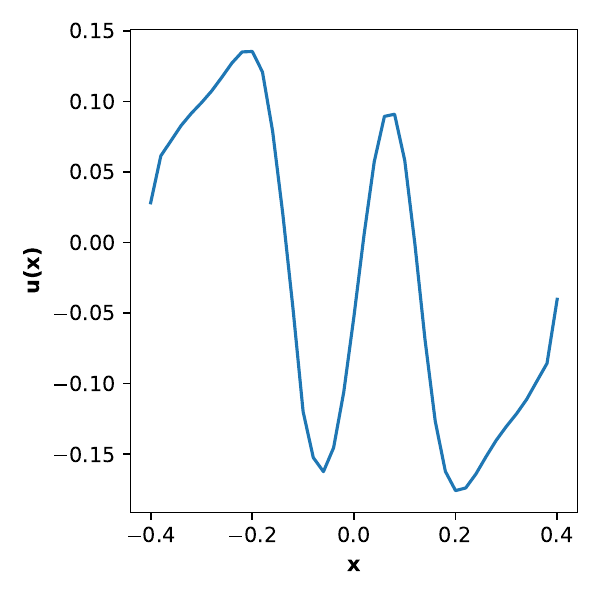}
    &
    \includegraphics[width=\figscaleone\linewidth,trim={6em 6em 6em 6em},clip,valign=m]{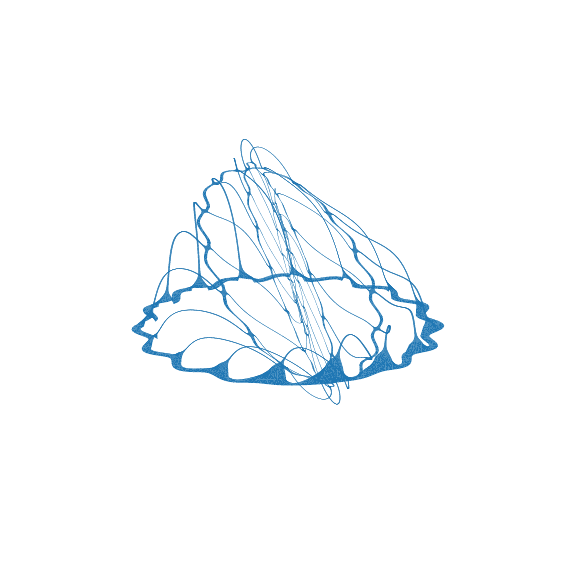} \\
    \hline
    \end{tabular}
    
    \caption{
    Random input functions (top row) and corresponding gradient functions (bottom row).
    }
    \label{fig:illus}
\end{figure*}

The active subspace method has found many applications across science and engineering \citep{kim2024adaptive,grey2018active,khatamsaz2021adaptive,ji2019quantifying,constantine2015discovering,bittner2021temporal}.
In part, this is due to the  proliferation of computing environments which allow for automatic calculation of gradients with respect to model parameters, including deep learning frameworks (e.g. Tensorflow \citep{tensorflow2015-whitepaper}, Pytorch \citep{paszke2019pytorch}, JAX \citep{jax2018github}) and scientific computing packages (e.g. FEniCS \citep{alnaes2015fenics}, custEM \citep{rochlitz2019custem}). 
These allows for the computation of gradients with respect to parameters in arbitrarily high dimension, and even infinite dimensional spaces in the case of e.g. FEniCS.

The computational capabilities of software systems therefore appear to have run ahead of the statistical methodology and mathematical understanding of the active subspace method.
The purpose of this article is to rectify this.
In particular, our primary contributions are as follows:
\begin{enumerate}
    \item We define an extended Active Subspace analysis for real-valued functionals on Hilbert spaces.
    \item We show that many celebrated properties of the active subspace extend to the functional context. 
    \item We propose a computable Monte-Carlo estimator and establish its convergence.
    \item Sophisticated case studies show how to use this active subspace to improve practical analyses.
    \item We release an open-source code-base building on FEniCS that implements this methodology.
\end{enumerate}


\section{Motivation and Background}

Since we never represent functions exactly on a computer, but must instead discretize them at some point, a natural objection to this article's project is that we can always first discretize the function and then apply the active subspace method to the discretization.
We thus provide with a discussion of this idea. 
Next, we provide an overview of the class of test functions, optimization of PDE parameters, which have motivated our methodology.
But first, we overview related work.

\subsection{Sensitivity Analysis in Infinite Dimension}
Though active subspaces themselves have not previously been applied to infinite dimensional problems to the best of our knowledge, we now review existing work on sensitivity analysis more broadly in the infinite dimensional setting.
The procedure in this article will be closely related to functional PCA \citep{ramsay2005functional}, by analogy to the close connection between PCA and the active subspace method in the finite dimensional setting.
In the context of finance, sensitivity analysis is commonly used under the name ``the Greeks"; \citet{benth2021sensitivity} extended these to the infinite dimensional setting.
Sobol indices, an alternative to gradient-based sensitivity, have been studied in infinite dimension \citep[e.g.][]{iooss2009global,gamboa2014sensitivity}.
These are most useful in the observational context, when gradient information is not available.
We will in this article rather be concerned with the analysis of a computational model for which gradients are available.
\citet{wu2023large} proposed an interesting technique for optimal Bayesian experimental design over PDE parameters and \citet{beskos2017geometric} propose using gradients to accelerate Bayesian computation.

\subsection{The Noncommutativity of Discretization}
\citet{diez2024design} review sensitivity analysis with respect to functions using the perfectly reasonable idea to first discretize the functions using a finite parameter vector $\theta\in\mathbb{R}^P$ and subsequently to view the objective function as a $\theta\to\mathbb{R}^P$ mapping, upon which standard finite dimensional methods can be applied.
Though useful, this is distinct from performing sensitivity analysis directly in the function space, which we now illustrate by example.
Let $u(\theta) := \theta_1 h_1 + (1-\theta_1) h_2 + \theta_2 h_3$, where $h_1,h_2,h_3$ is a dictionary of three functions; maybe they're $\sin$, $\cos$, and $\exp$.
Now say that our functional is: $f = \langle \cdot, h_1\rangle + \langle \cdot, h_2\rangle$.
According to a standard active subspace analysis with respect to $\theta$, there is only one important dimension: $\theta_1$. 
By contrast, the functional active subspace analysis with respect to $u$ as we will introduce in this article would give that $\textrm{span}(\{h_1,h_2\})$ is the two dimensional active subspace (assuming $h_1\neq ah_2 \,\,\forall a\in\mathbb{R}$).
Since the vector space dimension of these two active subspaces disagree, there cannot be a continuous mapping between them: they are truly different objects.
A similar inequivalence is present in the infinite dimensional optimization literature, where it is well known that discretizing then optimizing may lead to different results than optimizing then discretizing \cite[e.g.][]{liu2019non,gholami2019anode,onken2020discretize}.

There are also some nice practical advantages to working with the active subspace directly in the function space.
For one, we obtain eigenfunctions rather than eigenvalues. 
If the functions residing in the Hilbert space are defined on a low dimensional domain, they can be directly visualized in a plot; by contrast, interpreting an eigenvector beyond simply noting the magnitude of its largest elements can be difficult.
Additionally, staying in the function space means we can make comparisons across different mesh sizes or otherwise across discretization schemes, which is important in adaptive meshes \citep{nochetto2009theory}, or in any other circumstance where the number and meaning of the parameters in the discretization can vary.

\subsection{PDE-Constrained Optimization}

Though the active subspace methodology we develop is generally applicable, we will in this article primarily consider applications in applied math, and especially in modeling with differential equations.
Function-valued parameters can be of interest in such contexts in various ways. 
Sometimes, they are equated to a differential operator in a differential state equation of the form $\mathbb{D} v = l $, where $\mathbb{D}$ is some differential operator.
When $l$ does not depend on $v$, it is called a \textit{forcing function}.
A function may also specify an inhomogeneous Dirichlet or Neumann boundary condition.
These may be of interest to develop an optimal controller \citep{evans2024introduction} or to solve an inverse problem \citep{vogel2002computational}.
We next introduce three such problems from the applied engineering literature which serve as our example real-valued functionals, the sensitivity of which we want to compute. 
We give additional details in Appendix \ref{sec:app_app}.

\subsubsection{Distributed Poisson Control}

As an illustration, consider a Poisson Distributed Control problem (see Figure \ref{fig:illus}, left).
$\mathcal{X}=(0,1)\times(0,1)$ with homogeneous Dirichlet boundary conditions. 
Given a desired state $v_d(x,y) = \sin(4\pi x)\sin(\pi y)$ and regularization parameter $\alpha>0$, the suitability of a candidate control function $m$ is is given by:
\begin{equation}
  \label{eq:cost}
  J(m) = \frac12 \int_\dom (v-v_d)^2\,\dx
           + \frac{\alpha}{2} \int_\dom \dx,
\end{equation}
subject to the state equation:
\begin{equation}
  \label{eq:state}
  -\Delta v \;=\; m \quad \text{in } \mathcal{X}, 
  \qquad v \;=\; 0 \quad \text{on } \partial\mathcal{X} \,.
\end{equation}
Given a candidate setting $m$, the function $J$ measures its suitability for driving $v$ toward $v_d$ subject to $L^2$ regularization.
Modern computational platforms like FEniCS \citep{alnaes2015fenics,logg2012automated} allow for nearly exact calculation of functional gradients of $J$ at any value of $m$ under a variety of discretization schemes.

\subsubsection{Laminar Jet}

Following \citet{klein2003investigation,beskos2017geometric}, we study the problem of reconstructing the flow of a non-reacting laminar jet at the inlet boundary given 41 measurements on the outflow boundary.
This involves solving a Navier-Stokes equation over a 2D rectangular region to evaluate the resultant pressure and velocity fields.
In this problem the input function gives the 1D inflow profile of the jet, the forward model is run, and predictions at the boundaries are compared with ground truth data using mean squared error (see Figure \ref{fig:illus}, middle).

\subsubsection{Kirigami Electronics}

\citet{yang2024kirigami} propose the idea of \textit{Kirigami Electronics}, where Kirigami is, like Origami, a Japanese art form involving folded paper, but which also places cuts in the paper.
By shaping electronics in this manner, they unfurl into complex shapes based on their environment.
To develop their technology, \citet{yang2024kirigami} have developed a sophisticated computational model to determine what shape their intricate geometry will obtain in the steady-state balancing elastic energy and the load.
This problem involves specifying a 3D function over an irregular domain, and the cost function to be minimized is the energy of the configuration (see Figure \ref{fig:illus}, right).

%
%
%
%
%

\section{Active Subspaces in Hilbert Space}
\label{sec:defC}


Let $(\mathcal H,\langle\cdot,\cdot\rangle)$ be a real, separable Hilbert space and let
$f:\Omega\subset\mathcal H\to\mathbb R$ be a real-valued functional defined on an open set $\Omega$. 
We use extensively the notion of a derivative on a Hilbert space in this article by way of Fr\'echet and G\^ateaux differentiation.
Intuition is most readily gained when $\Hcal$ contains real-valued functions $u$ defined over some domain $\dom$.
Then, the functional derivative $Df(u)$ may be identified with a member of $\Hcal$, i.e., viewed as a real-valued function on $\dom$.
As with a classical derivative, evaluating it at some element $x\in\dom$ gives in some sense the infinitesimal change of $f(u)$ as $u(x)$ is varied. 
We now make this precise.
\begin{definition}
\label{def:nablaf}
Fix $u\in\Omega$. If $f$ is Fr\'echet differentiable at $u$, its derivative
$Df(u):\mathcal H\to\mathbb R$ is a bounded (continuous) linear functional. By the
Riesz representation theorem \citep{frechet1907, riesz1907espece, rudin1991}, there exists a \emph{unique} $\nabla \!f : \mathcal H\to\mathcal H$ such that for all $u \in \mathcal H$,
$\nabla \!f(u)\in\mathcal H$ and 
\begin{equation}
    \label{eq:Riesz}
    Df(u)[h] \;=\; \langle h,\; \nabla \!f(u)\rangle
\qquad\text{for all } h\in\mathcal H.
\end{equation}
We call $\nabla \!f(u)$ the (Fr\'echet) gradient of $f$ at $u$.
\end{definition}
Basic properties of $\nabla f$ are reviewed in Appendix \ref{sec:app_deriv}.

The integrand in the active subspace matrix is typically given as a vector times its transpose: $\nabla f(\x) \nabla f(\x)^\top$.
Taking the linear map perspective, we see that:
$
    \nabla f(\x) \nabla f(\x)^\top \mathbf{a} = 
    \langle \nabla f(\x), \mathbf{a}\rangle \nabla f(\x)\,.
$
This perspective immediately suggests an extension to the general Hilbert space setting.
We will denote this tensor product by $(a\otimes b)\,h := \langle h,b\rangle\,a$.

\begin{definition}
    Let $U$ be an $\mathcal H$-valued random variable with law $\rho$  on $\mathcal H$, and assume $\mathbb E \left[\|\nabla \!f(U)\|^2\right] <\infty$. Define the \emph{active subspace operator as:}
    \begin{equation}\label{eq:Cdef}
    \C \;:=\; \mathbb E\!\big[\nabla \!f(U)\otimes \nabla \!f(U)\big]
    \;\;:\;\; \mathcal H\to\mathcal H.
    \end{equation}
\end{definition}

In Appendix \ref{sec:app_well_defined}, we show that $\C$ is well defined as a Bochner expectation and is trace-class (and thus compact), self-adjoint, and positive semidefinite.
It therefore is governed by the Spectral Theorem.

\begin{proposition}[Spectral decomposition of $\C$]
\label{Theorem:spectral:C}
There exists orthonormal eigenfunctions $\{w_i\}_{i\ge1}\subset\mathcal H$ and a nonincreasing sequence 
$\lambda_1\ge\lambda_2\ge\cdots\ge0$ such that
\setlength{\columnsep}{-1cm}
\begin{multicols}{2}
    \begin{enumerate}
        \item $\C w_i=\lambda_i w_i$,
        \item $\mathrm{trace}(\C) = \underset{i\ge1}{\sum}\lambda_i.$
        \item $\C = \sum_{i\ge 1}\lambda_i\,w_i\otimes w_i$ 
        \item $\mathbb E\!\Big[\big(Df(U)[w_i]\big)^2\Big] = \lambda_i$
    \end{enumerate}
\end{multicols}
\end{proposition}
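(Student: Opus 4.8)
The plan is to read off items 1--3 from the spectral theorem for compact self-adjoint operators, and to obtain item 4 directly from the definition of $\C$ together with the fact that bounded linear maps commute with Bochner integration.

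\textbf{Items 1--3.} From Appendix~\ref{sec:app_well_defined}, $\C$ is self-adjoint, positive semidefinite, and trace-class, hence compact, on the separable Hilbert space $\mathcal H$. The Hilbert--Schmidt spectral theorem then supplies an orthonormal basis $\{w_i\}_{i\ge1}$ of $\mathcal H$ consisting of eigenfunctions, $\C w_i=\lambda_i w_i$; self-adjointness makes the $\lambda_i$ real, positive semidefiniteness makes them nonnegative, and trace-classness makes $\sum_i\lambda_i<\infty$, so after relabeling we may take $\lambda_1\ge\lambda_2\ge\cdots\ge0$. That is item~1. Item~3 is the associated expansion $\C h=\sum_i\lambda_i\langle h,w_i\rangle w_i$ (convergent in operator norm, in fact in trace norm), which in the tensor notation $(a\otimes b)h=\langle h,b\rangle a$ reads $\C=\sum_i\lambda_i\,w_i\otimes w_i$; eigenfunctions with $\lambda_i=0$ contribute nothing, so it is harmless that $\C$ need not be injective. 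For item~2, the trace of a trace-class operator equals $\sum_j\langle\C e_j,e_j\rangle$ for any orthonormal basis $\{e_j\}$; choosing $e_j=w_j$ gives $\tr(\C)=\sum_j\lambda_j\langle w_j,w_j\rangle=\sum_j\lambda_j$.

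\textbf{Item 4.} Fix $i$. By Definition~\ref{def:nablaf}, $Df(U)[w_i]=\langle w_i,\nabla f(U)\rangle$, so pointwise
\begin{equation}
\big(Df(U)[w_i]\big)^2=\big\langle w_i,\nabla f(U)\big\rangle^2=\big\langle \langle w_i,\nabla f(U)\rangle\,\nabla f(U),\; w_i\big\rangle .
\end{equation}
Since $\E[\|\nabla f(U)\|^2]<\infty$, the random element $\langle w_i,\nabla f(U)\rangle\,\nabla f(U)$ is Bochner integrable in $\mathcal H$, and a bounded linear operator (here $\langle\cdot,w_i\rangle$) passes through a Bochner integral; moreover this integral is exactly $\C w_i$ by the definition of $\C$ and of the tensor product. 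Taking expectations gives
\begin{equation}
\E\big[(Df(U)[w_i])^2\big]=\big\langle \E[\langle w_i,\nabla f(U)\rangle\,\nabla f(U)],\, w_i\big\rangle=\langle\C w_i,w_i\rangle=\lambda_i\|w_i\|^2=\lambda_i .
\end{equation}

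\textbf{Main obstacle.} The only genuinely nontrivial point is the interchange of expectation with pointwise evaluation and with $\langle\cdot,w_i\rangle$, i.e.\ the identity $\C w_i=\E[\langle w_i,\nabla f(U)\rangle\,\nabla f(U)]$; this is where the Bochner-integral machinery of Appendix~\ref{sec:app_well_defined} is really used, and it is worth isolating as a lemma that a fixed bounded linear operator commutes with a Bochner integral (together with measurability of $u\mapsto\langle w_i,\nabla f(u)\rangle\nabla f(u)$ and the integrability bound $\E[\|\nabla f(U)\|^2]<\infty$). Everything else is bookkeeping: checking that reordering into a nonincreasing sequence is legitimate (finiteness of the trace), padding $\{w_i\}$ across $\ker\C$ if one wants a full basis, and specifying the mode of convergence claimed in item~3.
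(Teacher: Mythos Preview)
Your proposal is correct and follows essentially the same route as the paper: items~1--3 are read off from the spectral theorem for compact self-adjoint operators after invoking the properties established in Appendix~\ref{sec:app_well_defined}, and item~4 is the direct computation $\E[(Df(U)[w_i])^2]=\E[\langle w_i,\nabla f(U)\rangle^2]=\langle \C w_i,w_i\rangle=\lambda_i$ using that $\langle\cdot,w_i\rangle$ commutes with the Bochner expectation. Your explicit flagging of the Bochner-integral interchange as the only nontrivial step is on point and matches the paper's emphasis.
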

\begin{proof}
    Appendix \ref{sec:proof:spectral:C}
\end{proof}
\noindent Therefore, each $\lambda_i$ equals the average squared directional derivative of $f$ along the corresponding eigenfunction $w_i$, so larger $\lambda_i$ indicate directions with greater average variability of $f$. 

Thus we have provided an extension of the active subspace matrix.
Next, we give our definition of the active subspace itself.

\begin{definition}
Let the $n$-dimensional active subspace be given by $\mathcal A_n:=\mathrm{span}\{w_1,\dots,w_n\}$. 
\end{definition}

The value of $n$ is so chosen that the leading eigenvalues $\lambda_1,\dots,\lambda_n$ account for the dominant share of the spectrum. With the orthogonal decomposition $\mathcal H=\mathcal A_n\oplus\mathcal A_n^\perp$, projecting $u\in\mathcal H$ onto $\mathcal A_n$ retains the most informative coordinates and discards the (on average) insensitive ones. 

With the active subspace in hand, it is very useful to define a surrogate function which depends only on the active coordinates. 
To this end, for every $u\in\mathcal H$,
\begin{equation}\label{eq:surrogate}
f(u)\approx\tilde f\big(P_{\mathcal A_n}u\big)
=\tilde f\Big(\sum_{i=1}^{n}\langle u,w_i\rangle\,w_i\Big)\,,
\end{equation}
where $P_{\mathcal A_n}$ be the orthogonal projector onto $\mathcal A_n$.

We can extend the notion of a ridge function to infinite dimension.
In this case, $f(u) = \tilde{f}(\mathcal{M}u)$, where $\mathcal{M}:\Hcal\to\mathbb{R}^R$ is a linear operator.
In this case, $\C$ has rank at most $R$, its nonzero eigenfunctions span the range of $\mathcal{M}$, and $\mathcal A_n=\textrm{Range}(\mathcal{M})$ captures all gradient energy.

The following result is a Hilbert-space analogue of Proposition 2.3 in \citet{const:14}, which describes how ridge structure is reflected in the active subspace operator.

\begin{theorem}
\label{thm:HilbertProp23}
Fix $n\in\mathbb N$. Let $\Acal_n:=\mathrm{span}\{w_1,\dots,w_n\}$ and let
$P_{\Acal_n}$ denote the corresponding orthogonal projector.
Assume the trailing spectrum vanishes:
\begin{equation}\label{eq:trailingZero}
\lambda_{n+1}=\lambda_{n+2}=\cdots=0.
\end{equation}
Then the following hold.
\begin{enumerate}
\item[(i)] (\emph{Gradient lies in the active subspace}) For all $u\in\Hcal$,
$P_{\Acal_n^\perp}\nabla \!f(u)=0$ (equivalently, $\nabla \!f(u)\in\Acal_n$ everywhere).

\item[(ii)] (\emph{Level-set invariance})
If $u_1,u_2\in\Hcal$ satisfy $P_{\Acal_n}u_1=P_{\Acal_n}u_2$, then $f(u_1)=f(u_2)$.

\item[(iii)] (\emph{Gradient equality})
Under the same hypothesis, $\nabla \!f(u_1)=\nabla \!f(u_2)$.

\item[(iv)] (\emph{Factorization through $P_{\Acal_n}$})
There exists a $C^1$ map $\tilde f:P_{\Acal_n}(\Hcal)\to\mathbb R$ such that
$
f(u)=\tilde f\!\big(P_{\Acal_n}u\big)\
$
for all 
$
u\in\Hcal, \ \text{and}
\
\nabla\tilde f\!\big(P_{\Acal_n}u\big)=\nabla \!f(u)\in\Acal_n.
$
\end{enumerate}
\end{theorem}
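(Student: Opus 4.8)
The plan is to establish (i) from the spectral identity of Proposition~\ref{Theorem:spectral:C} and then to deduce (ii)--(iv) from it by one-dimensional calculus along affine lines in $\Hcal$. For (i): item~4 of Proposition~\ref{Theorem:spectral:C} gives $\E\big[(Df(U)[w_i])^2\big]=\lambda_i$, which by \eqref{eq:Riesz} reads $\E\big[\langle\nabla f(U),w_i\rangle^2\big]=\lambda_i$. Summing over $i>n$ and invoking \eqref{eq:trailingZero},
\[
\sum_{i>n}\E\big[\langle\nabla f(U),w_i\rangle^2\big]=\sum_{i>n}\lambda_i=0 ,
\]
so $\langle\nabla f(U),w_i\rangle=0$ $\rho$-a.s.\ for every $i>n$; intersecting these countably many probability-one events shows that, at $\rho$-a.e.\ $u$, $\langle\nabla f(u),w_i\rangle=0$ simultaneously for all $i>n$, and since $\overline{\operatorname{span}}\{w_i:i>n\}=\Acal_n^\perp$ this says $P_{\Acal_n^\perp}\nabla f(u)=0$ $\rho$-a.s. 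To promote this to all $u\in\Hcal$, I would use that $u\mapsto\|P_{\Acal_n^\perp}\nabla f(u)\|$ is continuous (this is where $C^1$-regularity of $f$ is used), so its zero set is closed and $\rho$-conull, hence equal to $\Hcal$ under the hypothesis that $\rho$ has full support (without that hypothesis, the conclusions hold on $\operatorname{supp}(\rho)$ instead). This almost-sure-to-everywhere step is the one genuinely delicate point; everything downstream is elementary.

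For (ii) and (iii) together, fix $u_1,u_2$ with $P_{\Acal_n}u_1=P_{\Acal_n}u_2$ and set $v:=u_2-u_1\in\Acal_n^\perp$. For an arbitrary $h\in\Hcal$ consider $\Phi(s,t):=f(u_1+sh+tv)$, which is $C^1$ on $\mathbb R^2$ by the chain rule. By (i), $\nabla f(u_1+sh+tv)\in\Acal_n$ while $v\perp\Acal_n$, so $\partial_t\Phi(s,t)=\langle\nabla f(u_1+sh+tv),v\rangle\equiv 0$; hence $\Phi(s,1)=\Phi(s,0)$, i.e.\ $f(u_2+sh)=f(u_1+sh)$ for every $s$. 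Evaluating at $s=0$ gives $f(u_2)=f(u_1)$, which is (ii); differentiating in $s$ at $s=0$ gives $\langle\nabla f(u_2),h\rangle=\langle\nabla f(u_1),h\rangle$ for every $h$, hence $\nabla f(u_1)=\nabla f(u_2)$, which is (iii).

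For (iv), put $\tilde f:=f|_{\Acal_n}$, that is $\tilde f(p):=f(p)$ for $p\in P_{\Acal_n}(\Hcal)=\Acal_n$. Since $P_{\Acal_n}$ is idempotent, $u$ and $P_{\Acal_n}u$ have the same image under $P_{\Acal_n}$, so (ii) gives $f(u)=f(P_{\Acal_n}u)=\tilde f(P_{\Acal_n}u)$ for all $u\in\Hcal$. Because $\Acal_n$ is finite-dimensional (hence closed and equal to $P_{\Acal_n}(\Hcal)$), $\tilde f$ is $C^1$, being the composition of the $C^1$ functional $f$ with the bounded linear inclusion $\Acal_n\hookrightarrow\Hcal$, and $D\tilde f(p)[h]=Df(p)[h]$ for $h\in\Acal_n$; as $\nabla f(p)\in\Acal_n$ by (i), the Riesz representative of $D\tilde f(p)$ computed inside $\Acal_n$ coincides with $\nabla f(p)$. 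Combining with (iii), $\nabla\tilde f(P_{\Acal_n}u)=\nabla f(P_{\Acal_n}u)=\nabla f(u)\in\Acal_n$, as required. Throughout I take $\Omega=\Hcal$, matching the statement, so that the line segments and affine two-parameter slices used above remain in the domain of $f$; if $\Omega$ were a proper open subset, one would restrict these identities accordingly and connect points through polygonal paths inside $\Omega$.
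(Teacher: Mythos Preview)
Your argument is correct and follows the standard route that the paper's appendix proof takes: extract (i) from the spectral identity $\lambda_i=\E\langle\nabla f(U),w_i\rangle^2$, upgrade from $\rho$-a.e.\ to everywhere via continuity of $\nabla f$ and full support of $\rho$ (which you rightly isolate as the one nontrivial hypothesis implicitly in play), and then obtain (ii)--(iv) by calculus along affine lines together with the chain rule. One small remark on your step for (i): the claim $\overline{\operatorname{span}}\{w_i:i>n\}=\Acal_n^\perp$ presumes the $\{w_i\}_{i\ge1}$ form a \emph{complete} orthonormal system of $\Hcal$, which Proposition~\ref{Theorem:spectral:C} as stated does not quite assert; you can either extend them to one (harmless since $\Hcal$ is separable), or bypass the issue entirely by computing $\E\|P_{\Acal_n^\perp}\nabla f(U)\|^2=\operatorname{trace}(P_{\Acal_n^\perp}\C P_{\Acal_n^\perp})=0$ directly from $\operatorname{ran}(\C)\subset\Acal_n$. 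Your two-parameter device $\Phi(s,t)=f(u_1+sh+tv)$ is a tidy way to obtain (ii) and (iii) in one stroke; the more common phrasing proves (ii) via the single line $t\mapsto f(u_1+tv)$ and then gets (iii) by applying (ii) to the shifted pair $(u_1+sh,\,u_2+sh)$ and differentiating in $s$ --- logically equivalent, and your packaging is slightly slicker.
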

\begin{proof}
    Appendix \ref{proof:HilbertProp23}
\end{proof}
\medskip

Next, we broaden our discussion to functions which do not enjoy ridge structure. 
We show how we can nevertheless construct low-dimensional approximations which enjoy low MSE under certain assumptions.
For a random element $U$ with law $\rho$
on $\Omega\subset\Hcal$, define
$
Y:=P_{\Acal_n}U\in\Acal_n$, 
$Z:=P_{\Acal_n^\perp}U\in\Acal_n^\perp$,
so that $U=Y+Z$.
Denote by $\rho_Y$ the law of $Y$.
We begin by defining an Active-surrogate extending \eqref{eq:surrogate} by conditional expectation:
\begin{equation}
\label{def:FY}
F(y)\;:=\;\E\big[f(U)\mid Y=y\big],\qquad y\in\Acal_n,
\end{equation}
Then $F$ is the $L^2(\rho)$-best predictor of $f$ among functions of $Y$.
We now demonstrate a bound on the error introduced by ignoring the remaining directions, extending Theorem 3.1 of \citet{const:14}.

\begin{theorem}[Hilbert-space MSE bound for the active-subspace surrogate]\label{thm:H-AS-MSE}
Under Assumption \emph{\ref{ass:A0}} and $\E\|\nabla \!f(U)\|^2<\infty$, the mean squared error of the active-surrogate F defined in \eqref{def:FY} satisfies
\begin{equation}\label{eq:H-AS-MSE}
\E\big( f(U)-F(Y)\big)^2\;\le\; C_\star\;\sum_{i>n}\lambda_i
\;=\; C_\star\;\E\big\|\Proj_{\Acal_n^\perp}\nabla \!f(U)\big\|^2.
\end{equation}
\end{theorem}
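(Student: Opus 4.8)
The plan is to reduce the global mean-squared error to a conditional (``fiberwise'') variance, control that variance by the Poincar\'e-type inequality provided by Assumption~\ref{ass:A0}, and then recognize the resulting bound as a tail sum of the spectrum of $\C$. Since $F(y)=\E[f(U)\mid Y=y]$ is the $L^2(\rho_Y)$-orthogonal projection of $f(U)$, the tower property gives
\begin{equation*}
\E\big(f(U)-F(Y)\big)^2=\E\big[\operatorname{Var}\big(f(U)\mid Y\big)\big].
\end{equation*}
Because $\Hcal$ is separable, the joint law of $(Y,Z)=(P_{\Acal_n}U,\,P_{\Acal_n^\perp}U)$ on the Polish space $\Acal_n\times\Acal_n^\perp$ disintegrates, giving a regular conditional distribution $\rho_{Z\mid y}$ on $\Acal_n^\perp$; for $\rho_Y$-a.e.\ $y$, $\operatorname{Var}(f(U)\mid Y=y)$ is the variance of $g_y(z):=f(y+z)$ under $\rho_{Z\mid y}$ (the requisite integrability $f(U)\in L^2(\rho)$ itself falls out of the fiberwise bound below).

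Next I would pin down the right ``partial gradient.'' Wherever $f$ is Fr\'echet differentiable at $u=y+z\in\Omega$, the chain rule shows $g_y$ is Fr\'echet differentiable at $z$ with $Dg_y(z)[h]=Df(y+z)[h]=\langle h,\nabla f(y+z)\rangle$ for $h\in\Acal_n^\perp$, so the Riesz gradient of $g_y$ inside the Hilbert space $\Acal_n^\perp$ is exactly $P_{\Acal_n^\perp}\nabla f(y+z)$. Applying the Poincar\'e inequality of Assumption~\ref{ass:A0} to $g_y$ under $\rho_{Z\mid y}$ yields, for $\rho_Y$-a.e.\ $y$,
\begin{equation*}
\operatorname{Var}_{\rho_{Z\mid y}}\big(g_y\big)\;\le\;C_\star\,\E\big[\,\|P_{\Acal_n^\perp}\nabla f(U)\|^2 \,\big|\, Y=y\,\big],
\end{equation*}
where $C_\star$ is the uniform Poincar\'e constant (equivalently, the supremum of the fiberwise constants) supplied by the assumption. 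Integrating over $Y\sim\rho_Y$ and applying the tower property once more gives $\E(f(U)-F(Y))^2\le C_\star\,\E\|P_{\Acal_n^\perp}\nabla f(U)\|^2$.

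It remains to identify the right-hand side with $\sum_{i>n}\lambda_i$. From $\langle\C h,h\rangle=\E\langle\nabla f(U),h\rangle^2$, any $h\perp\overline{\mathrm{Range}(\C)}$ has $\langle\nabla f(U),h\rangle=0$ a.s., so $\nabla f(U)\in\overline{\mathrm{span}}\{w_i:\lambda_i>0\}$ almost surely; hence $P_{\Acal_n^\perp}\nabla f(U)=\sum_{i>n}\langle\nabla f(U),w_i\rangle\,w_i$ a.s.\ (terms with $\lambda_i=0$ vanishing). Bessel's identity on that subspace together with part~(4) of Proposition~\ref{Theorem:spectral:C} then gives
\begin{equation*}
\E\big\|P_{\Acal_n^\perp}\nabla f(U)\big\|^2=\sum_{i>n}\E\big(Df(U)[w_i]\big)^2=\sum_{i>n}\lambda_i,
\end{equation*}
the interchange of sum and expectation being Tonelli's theorem for nonnegative summands. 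Combining the last two displays is precisely \eqref{eq:H-AS-MSE}.

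I expect the main obstacle to be the middle step: one must check that the hypotheses of the infinite-dimensional Poincar\'e inequality genuinely hold on each fiber --- in particular that the conditionals $\rho_{Z\mid y}$ admit Poincar\'e constants bounded uniformly in $y$ (this is exactly what Assumption~\ref{ass:A0} is there to guarantee) and that $g_y$ is differentiable $\rho_{Z\mid y}$-a.e.\ on its support, which is delicate when $\Omega$ is a proper subset of $\Hcal$, so that $\{z\in\Acal_n^\perp:y+z\in\Omega\}$ need be neither convex nor connected. A secondary, purely technical point is the $\rho_Y$-measurability of $y\mapsto\operatorname{Var}_{\rho_{Z\mid y}}(g_y)$ and of the conditional gradient-norm, needed to integrate the fiberwise estimate; this follows from the standard disintegration machinery given separability of $\Hcal$.
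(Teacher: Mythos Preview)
Your proposal is correct and follows essentially the same route the paper takes: tower property to reduce to conditional variance, a fiberwise Poincar\'e inequality on the inactive component (this is precisely what Assumption~\ref{ass:A0} encodes, mirroring the role of Constantine's Assumption~3.1), then identification of $\E\|P_{\Acal_n^\perp}\nabla f(U)\|^2$ with the tail spectrum via Proposition~\ref{Theorem:spectral:C}(4). Your careful treatment of the disintegration and of the kernel of $\C$ in the final identity is, if anything, more explicit than is strictly necessary.
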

\begin{proof}
    Appendix \ref{sec:proveH-AS-MSE}.
\end{proof}

\citet[Section 3.1.2;][]{wycoff2022sensitivity} showed how to establish a transformation which equalizes the active subspace importance of input dimensions with nonzero impact.
In particular, they showed that if  $\C^{1/2}$ is positive matrix square root of $\C$, then the mapping between the ``warped" input $V$ transformed by $\C^{\frac1{2}}$ and the output $f(U)$ is equally sensitive to the input directions in the range of $\C$.
Our next result extends this to the Hilbert space setting.

\begin{theorem} [Active directions equalized by $\C^{1/2}$] 
\label{thm:rotate}
Assume $ \mathbb{E}\,\|\nabla \!f(U)\|^2<\infty$ and Assumption \ref{ass:inactive-invariance} holds.
Let 
$ \mathcal{B}\ :=\ \overline{\operatorname{ran}(\C^{1/2})}\ =\ (\ker \C)^\perp$ and let $\ P_{\mathcal{B}}$ denote the orthogonal projector onto $\mathcal{B}$. 
Then:
\begin{align}
\mathbb E_{\rho_V}\big[\nabla_v \tilde f(V)\otimes \nabla_v \tilde f(V)\big] = P_{\mathcal{B}}\,,
\end{align}
where $V=\C^{\frac1{2}} U$ with distribution $\rho_V$ implied by $\rho$.
\end{theorem}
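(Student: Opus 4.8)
The plan is to push the defining identity for the gradient through the linear change of variables $V=\C^{1/2}U$ and then to cancel the resulting factors of $\C^{1/2}$. I would begin with two preliminaries. First, $\nabla f(U)\in\mathcal{B}=(\ker\C)^{\perp}$ almost surely: if $\C w=0$ then $0=\langle\C w,w\rangle=\E[\langle\nabla f(U),w\rangle^{2}]$, so $\langle\nabla f(U),w\rangle=0$ a.s., and letting $w$ range over an orthonormal basis of $\ker\C$ gives $P_{\mathcal{B}^{\perp}}\nabla f(U)=0$ a.s. Second, by Assumption~\ref{ass:inactive-invariance}, $f$ depends on $u$ only through $P_{\mathcal{B}}u$; since the preimage of a point under $\C^{1/2}$ is unique modulo $\ker\C$, the surrogate $\tilde f(\C^{1/2}u):=f(u)$ is well defined on $\operatorname{ran}(\C^{1/2})\subseteq\mathcal{B}$, and $\tilde f(V)=f(U)$ a.s.\ because $V=\C^{1/2}U=\C^{1/2}P_{\mathcal{B}}U$.

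Next I would establish the transport identity via the chain rule. For $v=\C^{1/2}u$ and a direction $h\in\mathcal H$, differentiating $\tilde f(v+t\,\C^{1/2}h)=f(u+th)$ at $t=0$ gives $D\tilde f(v)[\C^{1/2}h]=Df(u)[h]=\langle h,\nabla f(u)\rangle$; comparing with $D\tilde f(v)[\C^{1/2}h]=\langle\C^{1/2}h,\nabla_v\tilde f(v)\rangle=\langle h,\C^{1/2}\nabla_v\tilde f(v)\rangle$ yields
\[
\nabla f(u)=\C^{1/2}\,\nabla_v\tilde f(v),\qquad v=\C^{1/2}u,
\]
with $\nabla_v\tilde f(v)\in\mathcal{B}$. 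Testing against the eigenbasis $\{w_i\}$ of $\C$ from Proposition~\ref{Theorem:spectral:C}, this reads $\langle\nabla_v\tilde f(V),w_i\rangle=\lambda_i^{-1/2}\langle\nabla f(U),w_i\rangle$ for $\lambda_i>0$ and $\langle\nabla_v\tilde f(V),w_i\rangle=0$ for $\lambda_i=0$ (since $\nabla_v\tilde f(V)\perp\ker\C$).

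To conclude, set $\C_V:=\E_{\rho_V}[\nabla_v\tilde f(V)\otimes\nabla_v\tilde f(V)]$ and use $(\C^{1/2}a)\otimes(\C^{1/2}a)=\C^{1/2}(a\otimes a)\,\C^{1/2}$ together with the transport identity to sandwich:
\[
\C^{1/2}\,\C_V\,\C^{1/2}=\E\big[(\C^{1/2}\nabla_v\tilde f(V))\otimes(\C^{1/2}\nabla_v\tilde f(V))\big]=\E[\nabla f(U)\otimes\nabla f(U)]=\C=\C^{1/2}P_{\mathcal{B}}\,\C^{1/2},
\]
the last equality holding since $P_{\mathcal{B}}$ fixes $\operatorname{ran}(\C^{1/2})$. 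Hence $\C^{1/2}(\C_V-P_{\mathcal{B}})\C^{1/2}=0$. Because both $\C_V$ and $P_{\mathcal{B}}$ annihilate $\ker\C$ and map $\mathcal{B}$ into $\mathcal{B}$ (using $\nabla_v\tilde f(V)\in\mathcal{B}$ a.s.), while $\C^{1/2}$ is injective with dense range on $\mathcal{B}$, a density argument on the bilinear form $(a,b)\mapsto\langle(\C_V-P_{\mathcal{B}})a,b\rangle$ forces $\C_V=P_{\mathcal{B}}$. Equivalently, and sidestepping any boundedness issue with $\C_V$ (its trace equals $\operatorname{rank}\C$, which may be infinite), one reads off from the coordinate form that $\langle\C_V w_i,w_j\rangle=\lambda_i^{-1/2}\lambda_j^{-1/2}\langle\C w_i,w_j\rangle=\delta_{ij}\,\mathbf 1\{\lambda_i>0\}=\langle P_{\mathcal{B}}w_i,w_j\rangle$ for all $i,j$, which pins down the operator against the complete orthonormal system $\{w_i\}$.

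The main obstacle is the chain-rule step: since $\C^{1/2}$ is compact with unbounded inverse, $\tilde f$ is a post-composition with an unbounded map on the dense but non-closed domain $\operatorname{ran}(\C^{1/2})$, so one must be careful about the sense in which $\tilde f$ is differentiable and whether its Riesz gradient $\nabla_v\tilde f(v)$ exists in $\mathcal H$ at all — it does precisely when $\nabla f(u)\in\operatorname{ran}(\C^{1/2})$. I would address this either by invoking the regularity supplied by the standing assumptions or, failing that, by interpreting $\nabla_v\tilde f(V)$ and $\C_V$ through the coordinate identities above, where every quantity is finite by Cauchy--Schwarz and $\E\|\nabla f(U)\|^{2}<\infty$; the remaining manipulations are then routine bookkeeping.
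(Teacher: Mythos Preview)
Your proposal is correct and follows the same route the paper takes: derive the chain-rule identity $\nabla f(u)=\C^{1/2}\nabla_v\tilde f(v)$, conjugate the outer-product expectation by $\C^{1/2}$ to recover $\C$, and then peel off the two factors of $\C^{1/2}$ on $\mathcal B$ via the eigenbasis. Your explicit acknowledgment that $\C_V$ need not be trace-class when $\operatorname{rank}\C=\infty$ (so the expectation must be read weakly, entrywise in the $\{w_i\}$ basis) and that $\nabla_v\tilde f$ only lives in $\Hcal$ when $\nabla f(u)\in\operatorname{ran}(\C^{1/2})$ are exactly the infinite-dimensional caveats the argument needs; the paper handles these the same way, leaning on the inactive-invariance assumption and the coordinate computation $\langle\C_V w_i,w_j\rangle=\delta_{ij}\mathbf 1\{\lambda_i>0\}$ rather than a Bochner-integral definition of $\C_V$.
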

\begin{proof}
    Appendix \ref{sec:proof:rotate}
\end{proof}

\section{Monte Carlo Estimation of $\C$}

We have thus shown that with $\C$ in hand, we can perform dimension reduction. 
However, we will typically not have an analytic expression for $\C$. 
Following the typical workflow in the finite dimensional setting \citep{constantine2014computing}, we will in this section develop and analyze a Monte-Carlo estimator $\Ch$ for $\C$.


\begin{definition}
Let $U_1,\dots,U_B\overset{\text{i.i.d.}}{\sim}\rho$. Set $g_b:=\nabla \!f(U_b)\in\Hcal$, $b=1,\dots,B$, and let $h\in\Hcal$. Define the empirical covariance operator
\[
\widehat \C_B \;:=\; \frac{1}{B}\sum_{b=1}^B g_b\otimes g_b\in\mathcal L(\Hcal);
\,\,
\widehat \C_B h=\frac{1}{B}\sum_{b=1}^B \langle h,g_b\rangle\,g_b.
\]
\end{definition}

Though this is an operator on $\Hcal$, it is of finite rank, and its action is computable via $B$ many $\Hcal$ inner products. 
This is precisely what is straightforwardly approximated in modern numerical environments. 

We next show how to compute its eigendecomposition.
To this end, we define the semi-infinite matrix 
$G_B:\mathbb{R}^B \to \mathcal{H}$ by $G_B e_b=g_b/\sqrt{B}$ so that $\widehat \C_B=G_B G_B^{*}$, and let $\Gamma_B:=G_B^{*}G_B\in\mathbb{R}^{B\times B}$.
Perform the eigendecomposition $\Gamma_B = \sum_{i=1}^B \sigma_i v_i v_i^\top$.

\begin{lemma}[Eigenanalysis for $\Ch$]
\label{lem:gram-spectral}
For all $i\in\{1, \ldots, B\}$, define $\hat w_i := \frac{G_b v_i}{\sqrt{\sigma_i}}$.
Then the $\hat w_i$ are $\Hcal$-orthogonal, $\Ch \hat w_i = \sigma_iw_i$, and:
$
\widehat \C_B=\sum_{i=1}^r \sigma_i\,\widehat w_i\otimes \widehat w_i.
$
\end{lemma}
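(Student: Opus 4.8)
The plan is to exploit the standard duality between the covariance-type operator $\widehat\C_B = G_B G_B^{*}$ on $\Hcal$ and the Gram matrix $\Gamma_B = G_B^{*}G_B$ on $\mathbb R^B$; this is the infinite-dimensional version of the kernel-PCA identity relating $XX^\top$ and $X^\top X$. First I would record the adjoint explicitly: since $G_B$ is finite rank it is bounded, and from $\langle G_B e_b, h\rangle = \frac{1}{\sqrt B}\langle g_b,h\rangle$ for all $h$ one reads off $G_B^{*}h = \frac{1}{\sqrt B}(\langle g_1,h\rangle,\dots,\langle g_B,h\rangle)^\top$. Composing then confirms $\widehat\C_B h = G_B G_B^{*} h = \frac{1}{B}\sum_{b=1}^B \langle g_b,h\rangle g_b$ and $(\Gamma_B)_{ab} = \frac{1}{B}\langle g_a,g_b\rangle$, matching the stated definitions; note also that $\Gamma_B$ is symmetric positive semidefinite, so its eigenvalues $\sigma_i\ge 0$ and $\sqrt{\sigma_i}$ is well defined.

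Next, fix an eigenpair $\Gamma_B v_i = \sigma_i v_i$ with $\sigma_i>0$, the $v_i$ chosen orthonormal in $\mathbb R^B$, and set $\hat w_i := G_B v_i/\sqrt{\sigma_i}$. Orthonormality is a one-line check: $\langle \hat w_i,\hat w_j\rangle = \frac{1}{\sqrt{\sigma_i\sigma_j}}\langle v_i, G_B^{*}G_B v_j\rangle = \frac{\sigma_j}{\sqrt{\sigma_i\sigma_j}}\langle v_i,v_j\rangle = \delta_{ij}$. The eigen-relation is equally short: $\widehat\C_B\hat w_i = \frac{1}{\sqrt{\sigma_i}}G_B(G_B^{*}G_B v_i) = \frac{\sigma_i}{\sqrt{\sigma_i}}G_B v_i = \sigma_i\hat w_i$.

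For the spectral sum I would set $r := \mathrm{rank}(\Gamma_B)$ and order the eigenvalues so that $\sigma_1\ge\cdots\ge\sigma_r>0=\sigma_{r+1}=\cdots=\sigma_B$, with $\{v_1,\dots,v_B\}$ an orthonormal basis of $\mathbb R^B$. For the zero directions, $\|G_B v_i\|^2 = \langle v_i,\Gamma_B v_i\rangle = \sigma_i = 0$, so $G_B v_i = 0$. Now for arbitrary $h\in\Hcal$, expanding $G_B^{*}h = \sum_{i=1}^B \langle v_i, G_B^{*}h\rangle v_i$ in this basis and applying $G_B$ gives $\widehat\C_B h = \sum_{i=1}^B \langle G_B v_i, h\rangle\,G_B v_i = \sum_{i=1}^r \sigma_i\langle\hat w_i,h\rangle\hat w_i = \big(\sum_{i=1}^r \sigma_i\,\hat w_i\otimes\hat w_i\big)h$, which is the claimed identity; it also shows that the $\hat w_i$, $i\le r$, are precisely the eigenfunctions of $\widehat\C_B$ with nonzero eigenvalue.

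None of the individual steps is hard; the points requiring a little care are (a) confirming $G_B^{*}$ is a genuine bounded adjoint and writing it down correctly, and (b) correctly discarding the kernel of $\Gamma_B$, i.e.\ proving $G_B v_i=0$ when $\sigma_i=0$, so that the sum truncates at $r$ and the nonzero spectra of $\widehat\C_B$ and $\Gamma_B$ coincide. I would close by remarking that this coincidence is exactly what makes the estimator computable: one only ever diagonalizes the $B\times B$ matrix $\Gamma_B$.
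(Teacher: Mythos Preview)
Your argument is correct and follows the standard $G_BG_B^{*}$ versus $G_B^{*}G_B$ duality that the paper itself sets up in the paragraph before the lemma; this is essentially the paper's approach as well. Your explicit computation of $G_B^{*}$, the one-line orthonormality and eigenvector checks, and the careful treatment of the zero eigenvalues (showing $G_Bv_i=0$ when $\sigma_i=0$ so the sum truncates at $r$) are exactly what is needed, and your closing remark on computability matches the paper's intended takeaway.
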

\begin{proof}
    Appendix 
\ref{sec:proof_gram-spectral}.
\end{proof}

We see that an eigenanalysis of the matrix $\Gamma_B$ unlocks an eigenanalysis for the operator $\Ch$. 
Their nonzero eigenvalues are shared, and the eigenfunctions of $\Ch$ are linear combinations of the sampled gradients $g_b$ with coefficients given by $[\frac{v_{i,1}}{\sigma_i}, \ldots ,\frac{v_{i,B}}{\sigma_i}]$. See Appendix \ref{sec:proof_gram-spectral} for a more detailed statement of Lemma \ref{lem:gram-spectral}.

Our next result establishes convergence of this Monte Carlo estimator.

\begin{theorem}[Consistency]
\label{thm:cons}
    Assume $\E\|\nabla \!f(U)\|^2<\infty$. 
    Then $\|\widehat \C_B-\C\|_{\mathrm{op}}\to0$ almost surely.
\end{theorem}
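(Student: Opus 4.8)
The plan is to recognize $\widehat\C_B$ as an empirical average of i.i.d.\ random elements of a \emph{separable} Hilbert space of operators, invoke the strong law of large numbers there, and then transfer the conclusion to the operator norm via a norm comparison.

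First I would work in the space $\HS=\HS(\Hcal)$ of Hilbert--Schmidt operators on $\Hcal$, with inner product $\ip{S}{T}_{\HS}=\tr(S^{*}T)$. Since $\Hcal$ is separable, so is $\HS$, and for every $S\in\HS$ one has $\norm{S}_{\op}\le\norm{S}_{\HS}$. Set $X_b:=g_b\otimes g_b$ with $g_b=\nabla \!f(U_b)$. Each $X_b$ is a measurable function of $U_b$, so the $X_b$ are i.i.d.; moreover $\norm{g\otimes g}_{\HS}=\norm{g}^2$, hence $\E\,\norm{X_b}_{\HS}=\E\,\norm{\nabla \!f(U)}^2<\infty$ by hypothesis, so the $X_b$ are Bochner-integrable $\HS$-valued random elements, and strong measurability is automatic from the separability of $\Hcal$ and the continuity of $(a,b)\mapsto a\otimes b$. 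Next I would identify their common mean: by Appendix \ref{sec:app_well_defined}, $\C$ is a well-defined trace-class (hence Hilbert--Schmidt) operator equal to the Bochner expectation $\E[\nabla \!f(U)\otimes\nabla \!f(U)]$; since the trace-class operators embed continuously into $\HS$, this Bochner integral may be read in $\HS$ and still equals $\C$. Applying the strong law of large numbers for i.i.d.\ Bochner-integrable random elements of a separable Banach space (Mourier's theorem) gives $\widehat\C_B=\frac1B\sum_{b=1}^{B}X_b\to\C$ almost surely in $\HS$-norm, and then $\norm{\widehat\C_B-\C}_{\op}\le\norm{\widehat\C_B-\C}_{\HS}\to0$ almost surely.

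The routine parts are the norm comparison and the mean identification; the only point needing care — the ``main obstacle'' in the sense of what must be checked rather than what is hard — is verifying the hypotheses of the Banach-space SLLN, namely strong measurability and Bochner integrability of the $\HS$-valued variables $X_b$, both of which reduce to separability of $\Hcal$ and the finite-second-moment assumption. A more self-contained alternative would decompose $g=g\,\mathbf{1}\{\norm{g}\le M\}+g\,\mathbf{1}\{\norm{g}>M\}$, apply a bounded/finite-rank SLLN together with a Borel--Cantelli argument to the truncated part, and control the remainder using $\E\,\norm{g}^2\mathbf{1}\{\norm{g}>M\}\to0$ as $M\to\infty$; but I would prefer the Mourier route as it is cleaner.
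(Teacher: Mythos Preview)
Your proposal is correct and is essentially the paper's argument: view $g_b\otimes g_b$ as i.i.d.\ Bochner-integrable elements of the separable Hilbert space $\HS(\Hcal)$ (using $\|g\otimes g\|_{\HS}=\|g\|^2$ and the second-moment hypothesis), apply the Banach-space strong law of large numbers to get $\widehat\C_B\to\C$ a.s.\ in $\|\cdot\|_{\HS}$, and conclude via $\|\cdot\|_{\op}\le\|\cdot\|_{\HS}$. The measurability and mean-identification checks you flag are exactly the points the paper handles, so nothing is missing.
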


\begin{proof}
    Appendix \ref{sec:proof_cons}.
\end{proof}

Since the active subspace operator itself is frequently used as a means for obtaining the active subspace, we next investigate the convergence behavior of an eigenanalysis on $\Ch$ to that on $\C$.
Recall that $(\lambda_i,w_i)_{i\geq 1}$ are the eigenpairs of $\C$ and $(\sigma_i, \hat w_i)_{i\in{1,\ldots,B}}$ are those of $\Ch.$


\begin{corollary}\label{cor:eig-consistency}
Assume $\E\|\nabla \!f(U)\|^2<\infty$. Then:

\begin{enumerate}[label=(\roman*),leftmargin=2em]
\item \emph{Eigenvalues.} For every $i\ge1$,
\[
|\sigma_i-\lambda_i|\ \xrightarrow[B\to\infty]{\mathrm{a.s.}}\ 0.
\]

\item \emph{Eigenfunctions.} 
Fix $i\ge1$ and suppose $\lambda_i$ is simple and isolated with spectral gap
\[
\gamma_i\;:=\;\min\{\lambda_{i-1}-\lambda_i,\ \lambda_i-\lambda_{i+1}\}\;>\;0
\quad(\lambda_0:=+\infty).
\]
Then $\|s_i\hat w_i-w_i\|\xrightarrow[B\to\infty]{\mathrm{a.s.}}0$ for appropriately chosen signs $s_i\in\{-1,1\}$.
\end{enumerate}
\end{corollary}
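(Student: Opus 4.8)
The plan is to deduce both claims from the operator-norm consistency of Theorem~\ref{thm:cons} together with standard perturbation theory for compact self-adjoint operators. We work on the single probability-one event $E:=\{\|\widehat{\C}_B-\C\|_{\op}\to 0\}$ from Theorem~\ref{thm:cons}; on $E$ it suffices to establish deterministic perturbation bounds controlled by $\|\widehat{\C}_B-\C\|_{\op}$, and these will then deliver almost-sure convergence for every index $i$ at once.

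For part (i), recall that $\C$ is compact, self-adjoint, and positive semidefinite (Appendix~\ref{sec:app_well_defined}) and that each $\widehat{\C}_B$ is a finite-rank self-adjoint positive operator (Lemma~\ref{lem:gram-spectral}); hence both have ordered eigenvalues given by the Courant--Fischer min--max formula, e.g.\ $\lambda_i=\min_{\dim V=i-1}\ \max_{\|h\|=1,\,h\perp V}\langle \C h,h\rangle$, with the convention $\sigma_i=0$ for $i>B$ (immaterial for fixed $i$, since eventually $B\ge i$). Comparing the two variational problems and bounding $|\langle(\widehat{\C}_B-\C)h,h\rangle|\le\|\widehat{\C}_B-\C\|_{\op}$ for unit $h$ yields Weyl's inequality $|\sigma_i-\lambda_i|\le\|\widehat{\C}_B-\C\|_{\op}$, whose right side tends to $0$ on $E$.

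For part (ii), fix $i$ with $\lambda_i$ simple and $\gamma_i>0$; note $\lambda_i-\lambda_{i+1}>0$ together with $\lambda_{i+1}\ge0$ forces $\lambda_i>0$, so for $B$ large $\sigma_i>0$ and $\widehat w_i$ is genuinely defined. Using part (i), on $E$ pick $B$ large enough that $\|\widehat{\C}_B-\C\|_{\op}<\gamma_i/3$; then $|\sigma_j-\lambda_j|<\gamma_i/3$ for all $j$, which together with $\lambda_{i-1}-\lambda_i\ge\gamma_i$ and $\lambda_i-\lambda_{i+1}\ge\gamma_i$ shows $\sigma_i$ is a simple eigenvalue of $\widehat{\C}_B$ separated from the rest of $\operatorname{spec}(\widehat{\C}_B)$ by at least $\gamma_i/3$ (for $i=1$ the upper side of this separation is vacuous). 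Let $P_i=w_i\otimes w_i$ and $\widehat P_i=\widehat w_i\otimes\widehat w_i$ be the associated rank-one spectral projectors. Writing each as a Riesz contour integral of the resolvent over a fixed circle $\Gamma$ of radius $\gamma_i/4$ about $\lambda_i$ (which encloses $\lambda_i$ and $\sigma_i$ and no other spectrum of either operator), applying the resolvent identity $(z-\widehat{\C}_B)^{-1}-(z-\C)^{-1}=(z-\widehat{\C}_B)^{-1}(\widehat{\C}_B-\C)(z-\C)^{-1}$, and bounding the resolvent norms by $4/\gamma_i$ on $\Gamma$ gives $\|\widehat P_i-P_i\|_{\op}\le C_i\|\widehat{\C}_B-\C\|_{\op}$ with $C_i$ depending only on $\gamma_i$; this is the Davis--Kahan $\sin\theta$ estimate. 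Finally, set $s_i:=\operatorname{sign}\langle\widehat w_i,w_i\rangle$ (take $s_i=1$ if this vanishes); since $w_i,\widehat w_i$ are unit vectors, $\|s_i\widehat w_i-w_i\|^2=2(1-|\langle\widehat w_i,w_i\rangle|)\le 2(1-|\langle\widehat w_i,w_i\rangle|^2)=2\|\widehat P_i-P_i\|_{\op}^2$, so $\|s_i\widehat w_i-w_i\|\le\sqrt2\,C_i\|\widehat{\C}_B-\C\|_{\op}\to0$ on $E$.

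The main obstacle is the eigenfunction step: one must first invoke the eigenvalue conclusion to be sure the perturbed eigenvalue $\sigma_i$ stays simple and isolated (so that $\widehat P_i$ really is the orthogonal projector onto $\operatorname{span}\widehat w_i$ and the contour $\Gamma$ separates it cleanly), and then push the resolvent estimate through while tracking that the only constant appearing is a function of the spectral gap $\gamma_i$. The remaining ingredients---Weyl's inequality for the eigenvalues, the sign bookkeeping converting the projector bound to the vector bound, and the fact that the governing event $E$ has probability one---are routine given Theorem~\ref{thm:cons}.
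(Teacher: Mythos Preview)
Your argument is correct and follows the same standard route the paper takes: deduce everything on the almost-sure event $\{\|\widehat\C_B-\C\|_{\op}\to0\}$ from Theorem~\ref{thm:cons}, apply Weyl's inequality (via Courant--Fischer) for the eigenvalues, and use the Riesz-projection/Davis--Kahan $\sin\theta$ bound for the eigenfunctions, finishing with the rank-one projector-to-vector conversion. One cosmetic point to tidy before a final write-up: your threshold $\|\widehat\C_B-\C\|_{\op}<\gamma_i/3$ does not quite guarantee $\sigma_i$ lies inside a circle of radius $\gamma_i/4$ (nor the stated $4/\gamma_i$ resolvent bound for $\widehat\C_B$); taking, e.g., the threshold $\gamma_i/8$ with contour radius $\gamma_i/4$ (or threshold $\gamma_i/4$ with radius $\gamma_i/2$) makes all the separation and resolvent estimates line up, and since you only need these for $B$ sufficiently large this costs nothing.
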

\begin{proof}
    Appendix \ref{app:proof_eig-consistency}.
\end{proof}

\medskip

We next determine the convergence rate under an additional moment assumption.

\begin{theorem}[$\sqrt{B}$-rate of convergence]\label{thm:MSE}
Assume $\E\|\nabla \!f(U)\|^4<\infty$. Then
\begin{align}
\E\big\|\widehat \C_B - \C\big\|_{op}=O(B^{-1/2}).
\end{align}
\end{theorem}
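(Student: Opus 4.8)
The plan is to lift the problem into the Hilbert space $\HS(\Hcal)$ of Hilbert--Schmidt operators on $\Hcal$ and use that $\widehat\C_B-\C$ is then an empirical mean of i.i.d.\ centered $\HS$-valued random variables, so that its second moment decays at the parametric rate. Since the operator norm is dominated by the Hilbert--Schmidt norm, $\E\norm{\widehat\C_B-\C}_\op\le\E\norm{\widehat\C_B-\C}_{\HS}$, and it suffices to control the latter; a single variance computation in $\HS$ plus Jensen's inequality then delivers the $O(B^{-1/2})$ rate.

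Concretely, I would write $\widehat\C_B-\C=\frac1B\sum_{b=1}^B X_b$ with $X_b:=g_b\otimes g_b-\C$ and $g_b=\nabla\!f(U_b)$. Under the hypothesis $\E\norm{\nabla\!f(U)}^4<\infty$, each $g_b\otimes g_b$ is a square-integrable $\HS$-valued random element, since $\norm{g_b\otimes g_b}_{\HS}=\norm{g_b}^2$ and hence $\E\norm{g_b\otimes g_b}_{\HS}^2=\E\norm{g_b}^4<\infty$; moreover $\C=\E[g_1\otimes g_1]$ is Hilbert--Schmidt, being trace-class (Appendix \ref{sec:app_well_defined}), so $\E X_b=0$ and $\E\norm{X_b}_{\HS}^2<\infty$. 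Because the $X_b$ are i.i.d.\ and centered, the Pythagorean identity in $\HS$ gives
\[
\E\norm{\widehat\C_B-\C}_{\HS}^2=\frac1{B^2}\sum_{b=1}^B\E\norm{X_b}_{\HS}^2=\frac1B\,\E\norm{X_1}_{\HS}^2=\frac1B\Big(\E\norm{g_1}^4-\norm{\C}_{\HS}^2\Big)\le\frac1B\,\E\norm{\nabla\!f(U)}^4 .
\]
Finally, by Jensen's inequality, $\E\norm{\widehat\C_B-\C}_\op\le\E\norm{\widehat\C_B-\C}_{\HS}\le\big(\E\norm{\widehat\C_B-\C}_{\HS}^2\big)^{1/2}\le B^{-1/2}\big(\E\norm{\nabla\!f(U)}^4\big)^{1/2}$, which is $O(B^{-1/2})$.

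I do not expect a genuine obstacle here; the only points requiring care are (a) checking that $g\otimes g$ is Bochner- and square-integrable as an $\HS$-valued variable, which is immediate from $\norm{g\otimes g}_{\HS}=\norm{g}^2$ and the fourth-moment assumption, and (b) the variance decomposition $\E\norm{\sum_b X_b}_{\HS}^2=\sum_b\E\norm{X_b}_{\HS}^2$, which is just orthogonality of centered i.i.d.\ summands in the Hilbert space $\HS$. One could equivalently run the argument coordinatewise over an orthonormal basis $\{e_i\}$ of $\Hcal$ using $\norm{A}_{\HS}^2=\sum_{i,j}\langle Ae_j,e_i\rangle^2$, but the coordinate-free version is cleaner. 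If a high-probability (rather than in-expectation) bound were desired, one would instead invoke a Hilbert-space Bernstein or Rosenthal inequality, but that is not needed for the stated $L^1$ rate.
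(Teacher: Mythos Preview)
Your argument is correct and matches the paper's approach: lift to $\HS(\Hcal)$, exploit $\|g\otimes g\|_{\HS}=\|g\|^2$ together with the fourth-moment hypothesis to obtain $\E\|\widehat\C_B-\C\|_{\HS}^2=B^{-1}(\E\|\nabla\!f(U)\|^4-\|\C\|_{\HS}^2)$, then pass to the operator norm via $\|\cdot\|_{\op}\le\|\cdot\|_{\HS}$ and Jensen. The paper additionally records the Chebyshev/Markov tail bound that this same variance computation yields, but the in-expectation rate is obtained exactly as you describe.
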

\begin{proof}
    Appendix \ref{sec:proof_MSE}, which also presents Chebyshev bounds.
\end{proof}

While the $\sqrt{B}$ convergence rate is typical, it also means we get diminishing marginal returns. 
This makes a uncertainty quantification about the $\C$ estimate itself practically useful.
Our final result concerns the asymptotic distribution of $\Ch$.

\begin{theorem}[Hilbert--space CLT in $\HS(\Hcal)$]\label{thm:CLT}
Under Assumption \ref{ass:A1} of the Appendix, in the separable Hilbert space $\HS(\Hcal)$,
\[
\sqrt{B}\,\big(\widehat \C_B - \C\big)\ \Rightarrow\ Z_\infty,
\]
where $Z_\infty$ is a mean-zero Gaussian element in $\HS(\Hcal)$ with covariance operator
given in the Appendix.
\end{theorem}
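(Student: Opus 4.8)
The plan is to recognize $\sqrt{B}\,(\widehat\C_B-\C)$ as a normalized sum of i.i.d.\ centered random elements of the separable Hilbert space $\HS(\Hcal)$ and to invoke the classical central limit theorem for i.i.d.\ random elements of a separable Hilbert space. Concretely, set $\Xi_b:=g_b\otimes g_b$ for $b=1,\dots,B$, where $g_b=\nabla \!f(U_b)$. Because $\Hcal$ is separable, $\HS(\Hcal)$ is itself a separable (real) Hilbert space under $\langle A,B\rangle_{\HS}=\tr(A^{*}B)$, and each $\Xi_b$ is a rank-one, self-adjoint, positive operator with $\norm{\Xi_b}_{\HS}=\norm{g_b}^2$. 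First I would check measurability: the gradient map $u\mapsto\nabla \!f(u)$ is Borel measurable into $\Hcal$ (established in Appendix \ref{sec:app_well_defined}) and $g\mapsto g\otimes g$ is continuous from $\Hcal$ into $\HS(\Hcal)$, so each $\Xi_b$ is a bona fide $\HS(\Hcal)$-valued random element, and the $\Xi_b$ are i.i.d.\ copies of $\Xi:=\nabla\!f(U)\otimes\nabla\!f(U)$.

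Next I would verify the moment hypothesis. The Hilbert-space CLT requires $\E\norm{\Xi}_{\HS}^2<\infty$, which by the rank-one identity equals $\E\norm{\nabla \!f(U)}^4$; this is exactly (the content of) Assumption \ref{ass:A1}. Since then $\E\norm{\Xi}_{\HS}\le\big(\E\norm{\Xi}_{\HS}^2\big)^{1/2}<\infty$, the Bochner mean $\E[\Xi]$ exists in $\HS(\Hcal)$, and it coincides with $\C$ by the computation of Appendix \ref{sec:app_well_defined} (the two integrals agree against every bounded linear functional on $\HS(\Hcal)$). Hence $\widehat\C_B-\C=\frac1B\sum_{b=1}^{B}(\Xi_b-\C)$ is an average of centered, square-integrable i.i.d.\ elements of $\HS(\Hcal)$, each of which lies in the closed subspace of self-adjoint Hilbert--Schmidt operators.

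Then I would apply the CLT itself: if $\eta,\eta_1,\eta_2,\dots$ are i.i.d.\ in a separable Hilbert space with $\E\eta=0$ and $\E\norm{\eta}^2<\infty$, then $\frac1{\sqrt B}\sum_{b\le B}\eta_b\Rightarrow\mathcal N(0,\mathcal Q)$, where $\mathcal Q$ is the (automatically trace-class) covariance operator of $\eta$, characterized by $\langle \mathcal Q A,B\rangle_{\HS}=\E\big[\langle\eta,A\rangle_{\HS}\langle\eta,B\rangle_{\HS}\big]$. In contrast with a general Banach space, no further geometric hypothesis (type/cotype) is needed and tightness is automatic from the second-moment bound. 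Taking $\eta_b=\Xi_b-\C$ yields $\sqrt B\,(\widehat\C_B-\C)\Rightarrow Z_\infty$, with $Z_\infty$ mean-zero Gaussian in $\HS(\Hcal)$, supported on self-adjoint operators, and with covariance operator $\mathcal Q$ given by $\langle \mathcal Q A,B\rangle_{\HS}=\E\big[\langle g\otimes g-\C,\,A\rangle_{\HS}\langle g\otimes g-\C,\,B\rangle_{\HS}\big]$; expanding these inner products in an eigenbasis $\{w_i\}$ of $\C$ produces the explicit fourth-moment expression of the gradient coordinates $\langle\nabla\!f(U),w_i\rangle$ that is recorded in the Appendix. (An alternative, self-contained route is Lindeberg's CLT applied to each finite-dimensional coordinate projection of $\Xi_b-\C$ in $\HS(\Hcal)$, combined with a tightness estimate controlling the tail variance $\sum_{i>m}\E|\langle\Xi-\C,e_i\rangle_{\HS}|^2\to0$; this recovers the same limit.)

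The routine parts are the measurability check and the moment reduction $\norm{\Xi}_{\HS}=\norm{\nabla\!f(U)}^2$. The only genuine subtlety is bookkeeping around Assumption \ref{ass:A1}: one must confirm it is stated strongly enough to deliver $\E\norm{\nabla \!f(U)}^4<\infty$ together with whatever measurability/separability regularity the invoked Hilbert-space CLT presumes, and then to write the limiting covariance operator $\mathcal Q$ in a usable closed form. Everything else---trace-class-ness of $\mathcal Q$, tightness, and identification of the Gaussian limit---follows from the Hilbert-space structure, so I do not anticipate a substantive obstacle beyond the careful identification of $\mathcal Q$.
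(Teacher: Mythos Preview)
Your proposal is correct and follows essentially the same route as the paper: view $\Xi_b=g_b\otimes g_b$ as i.i.d.\ random elements of the separable Hilbert space $\HS(\Hcal)$, use $\norm{\Xi}_{\HS}=\norm{\nabla\!f(U)}^2$ to reduce the second-moment hypothesis to the fourth-moment assumption on the gradient, identify $\E[\Xi]=\C$, and invoke the Hilbert-space CLT to obtain the Gaussian limit with covariance $\mathcal Q$ as you wrote it. The measurability and Bochner-mean identification steps you flag are handled exactly as you anticipate.
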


\begin{proof}
    Appendix \ref{sec:proof_CLT}
\end{proof}

\begin{figure*}
    \centering
    \newcommand{\scalefiguretwo}{0.28}

    \begin{tabular}{|c|c|c|c|}
    \hline
    &
    \texttt{poisson} &
    \texttt{laminar} &
    \texttt{kiri} \\
    \hline
    $\lambda$ & 
    \includegraphics[width=\scalefiguretwo\linewidth,valign=m]{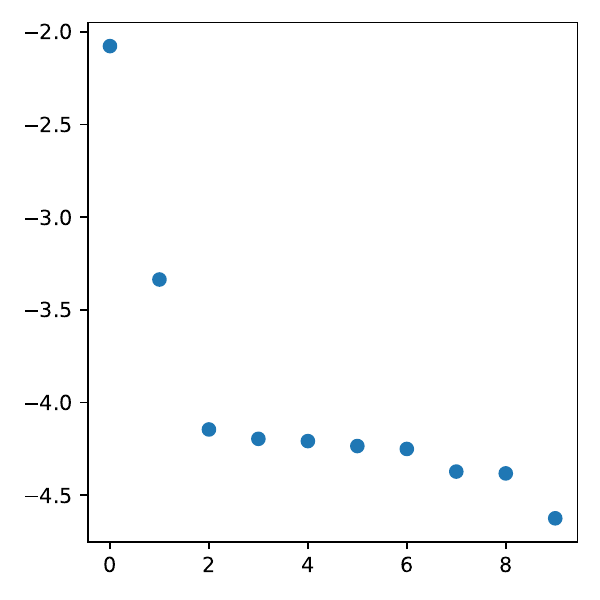}&
    \includegraphics[width=\scalefiguretwo\linewidth,valign=m]{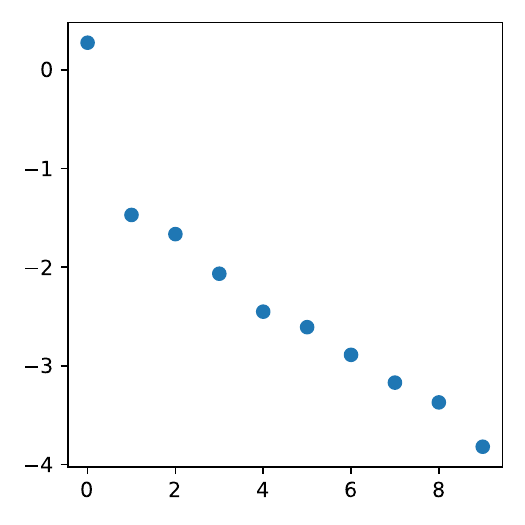}&
    \includegraphics[width=\scalefiguretwo\linewidth,valign=m]{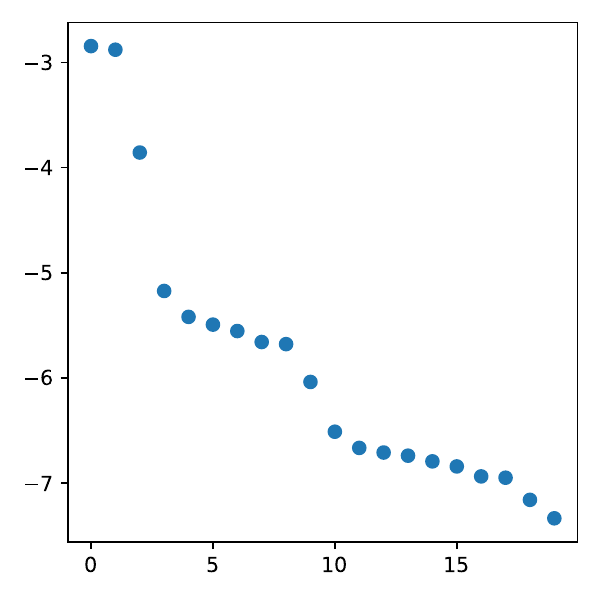} 
    \\
    \hline
    $w_1$ & 
   \includegraphics[width=\scalefiguretwo\linewidth,valign=m]{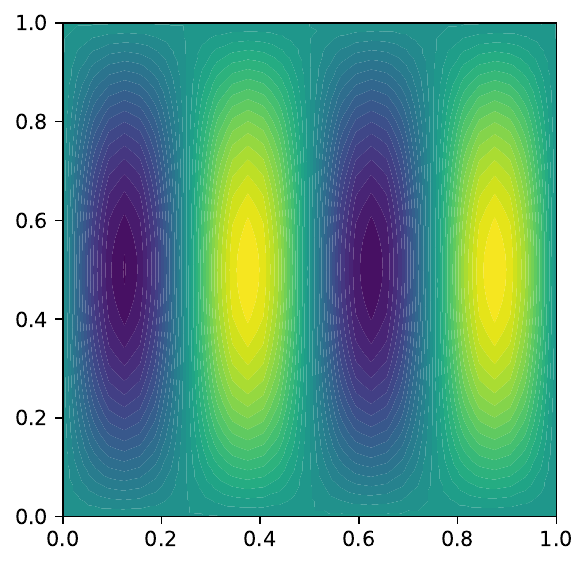}&
   \includegraphics[width=\scalefiguretwo\linewidth,valign=m]{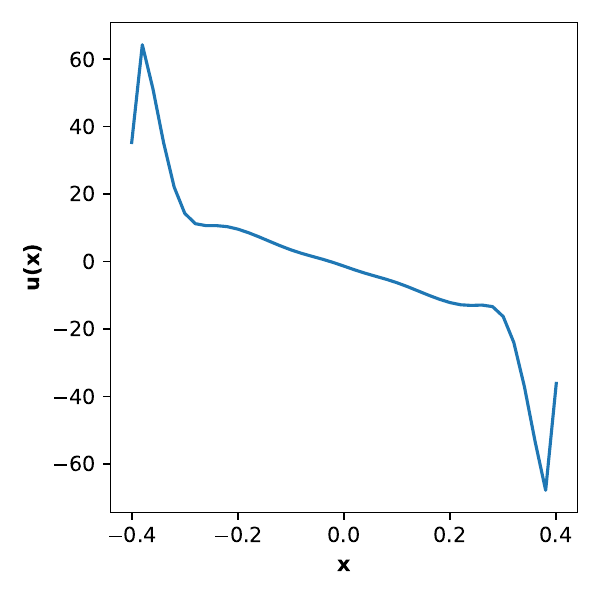}&
   \includegraphics[width=\scalefiguretwo\linewidth,valign=m,trim={6em 6em 6em 6em},clip,]{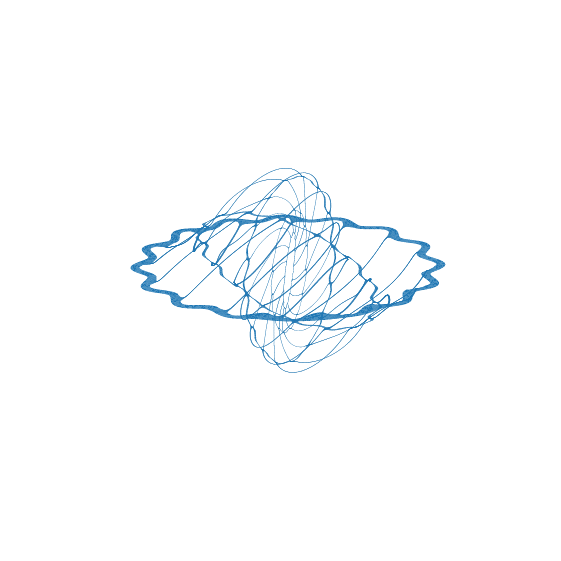}
   \\
    \hline
    $w_2$ & 
   \includegraphics[width=\scalefiguretwo\linewidth,valign=m]{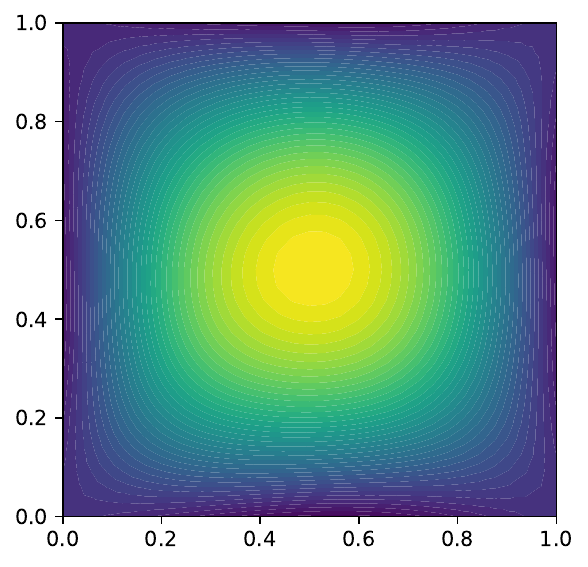}&
   \includegraphics[width=\scalefiguretwo\linewidth,valign=m]{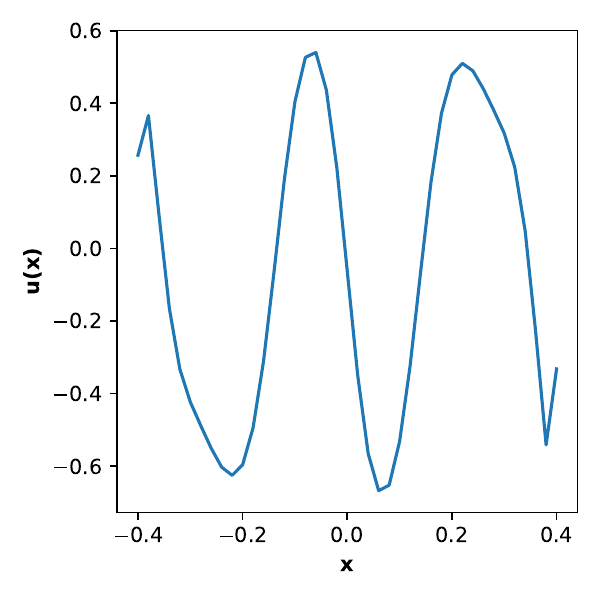}&
   \includegraphics[width=\scalefiguretwo\linewidth,valign=m,trim={6em 6em 6em 6em},clip,]{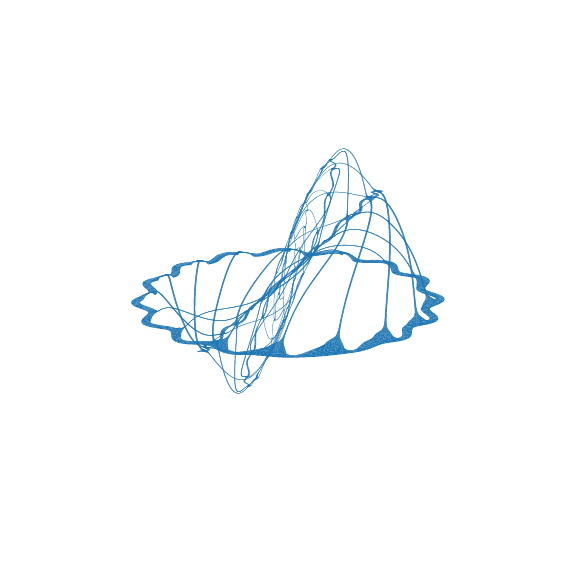}
   \\
    \hline
    $P_{W_{12}}$ &
   \includegraphics[width=\scalefiguretwo\linewidth,valign=m]{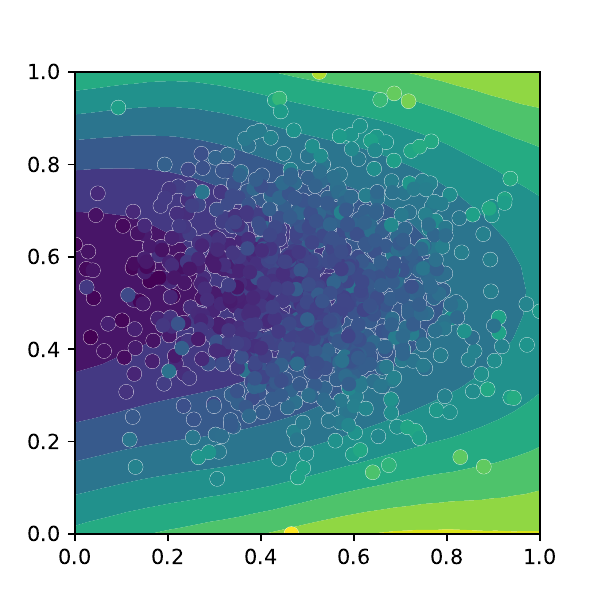} &
   \includegraphics[width=\scalefiguretwo\linewidth,valign=m]{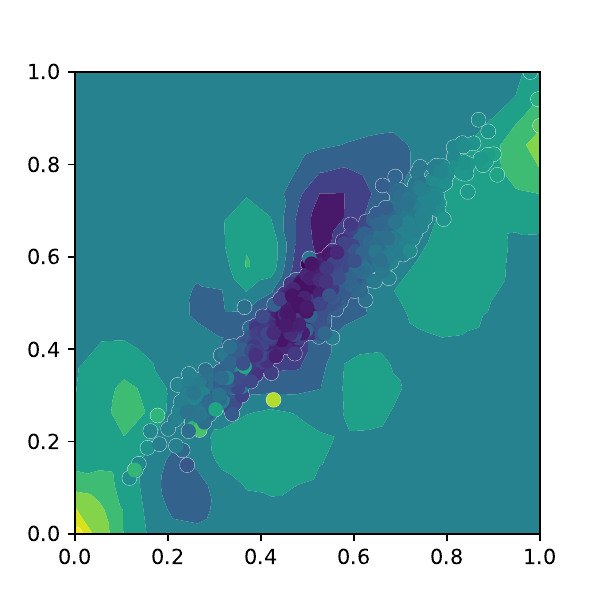} &
   \includegraphics[width=\scalefiguretwo\linewidth,valign=m]{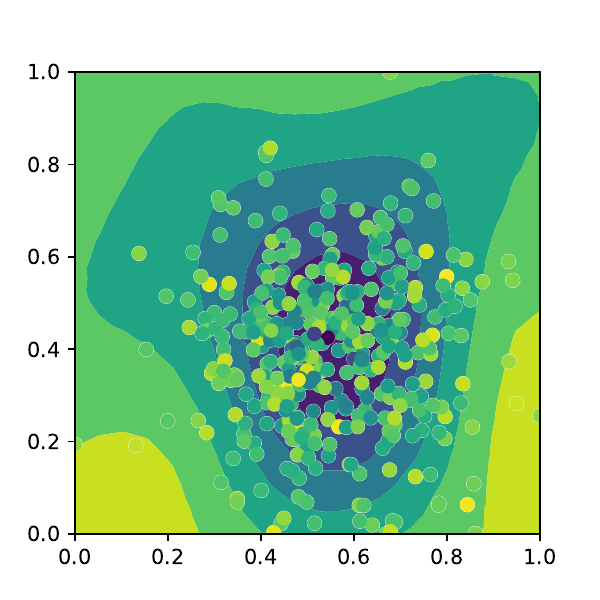}\\
   \hline
    \end{tabular}

    \caption{
    \textbf{Visualization with Active Subspaces.}
    Each row corresponds to a function.
    \textit{Top row:} Monte-Carlo estimates of first 10 eigenvalues of active subspace operator.
    \textit{Middle rows:} Estimate for first and second eigenfunctions.
    \textit{Bottom row:} Projection of samples along first two eigenfunctions (colored points); surface gives a Gaussian process conditional mean estimate of the $L^2$-optimal surrogate fit using \texttt{hetGPy} \citep{o2025hetgpy}.
    }
    \label{fig:viz}
\end{figure*}

\section{Applications}

We now give deploy our methodology on our motivating test functions and discuss the results. 

\subsection{Exploratory Analysis and Visualization}

The first question an analyst will have is whether an active subspace exists and if so what its dimension is.
\citet{constantine2015active} suggests to look for a significant gap in the log-eigenvalues, while traditionally for PCA, cumulative variance retained being above some threshold is used to determine the reduced subspace dimension.
As in finite dimension, this may in either case be achieved by visually inspecting the decay of the spectrum of the operator.
We show the estimated spectra in the top row of Figure \ref{fig:viz}.

A key application of the active subspace method is visualization of high dimensional functions.
Just as in finite dimension, we can project some functions $u_1, \ldots, u_N$ onto the two leading eigenfunctions
$w_1,w_2$.
Coloration can be used to indicate the value $f(u_i)$ of each point. 
However, if there is significant variation in $f$ beyond the first two eigenfunctions, this may lead to a difficult to interpret plot.
Therefore, we can enrich the plot by fitting a statistical model to the reduced dataset using $\langle u_i, w_1\rangle$, $\langle u_i, w_2\rangle$ as predictor variables being used to predict $f(u_i)$, and treat the remaining variation in the function as noise. 
A plot of the model's predictions can be used as an estimate of the optimal $L^2$ active subspace surrogate. 
We show these surfaces in the bottom row of Figure \ref{fig:viz}.
For the \texttt{poisson} and \texttt{laminar} functions, we see that most of the variation is captured by the low dimensional plot.
Two dimensions is not sufficient to capture all variation for the \texttt{kiri} function, but we nevertheless observe some low-dimensional, bowl-shaped structure.

\begin{figure}
    \centering
    \includegraphics[width=0.3\linewidth,trim={1.15em 1em 1.0em 1em},clip]{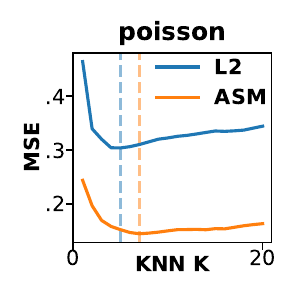}
    \includegraphics[width=0.3\linewidth,trim={1.15em 1em 1.0em 1em},clip]{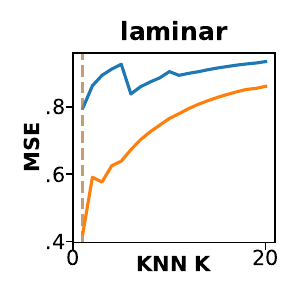}
    \includegraphics[width=0.3\linewidth,trim={1.15em 1em 1.0em 1em},clip]{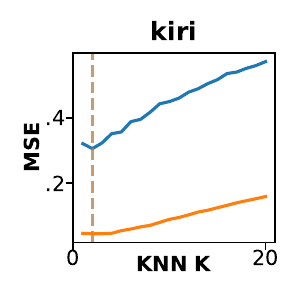}
    \caption{\textbf{Active Subspace for functional KNN Regression}.
    Cross Validation MSE comparison for KNN with standard $L^2$ distance versus Euclidean distance along active subspace projection.
    }
    \label{fig:knn}
\end{figure}

\subsection{Surrogate Modeling}
\label{sec:app_prewarp}

Active subspaces can also serve to increase the predictive accuracy of local surrogate models.
Following \citet{fukumizu2014gradient}, we test our dimension reduction method by evaluating its ability to improve the performance of a K Nearest Neighbors (KNN) regression.
KNN regression is most straightforwardly applied to our functional context by predicting the output of $f$ at some unobserved point $u^* \in \mathcal H$ by finding the $K$ training points with the least $L^2$ distance to $u^*$.
This will serve as our baseline method.

To deploy our active subspace operator in this context, we first project functions at which predictions are desired onto the active subspace $\Acal$, and then compute standard Euclidean distance. 
In particular:
\begin{align}
    \Vert u_1 - u_2 \Vert_{\mathcal{A}}
    :=
    \Vert P_{\mathcal{A}} (u_1 - u_2) \Vert_{L^2} \,.
\end{align}
This is computed by computing:
$
    \Omega = \G^\top\V \in \mathbb{R}^{B\times B} \,
$
and then computing the distance between coordinates:
\begin{align}
    \Vert u_1 - u_2 \Vert_{\Ch}
    =
    \sqrt{\sum_{b=1}^B(\omega_{1,b}-\omega_{2,b})^2} \,.
\end{align}

Using leave-one-out Cross Validation, we compare the predictive error of KNN using the standard $L^2$ distance versus the distance along the active subspace for various $K\in\{1, \ldots, 20\}$ on our test functions.
Figure \ref{fig:knn} gives the results.
We see that on all three functions, the active subspace distance is better from a predictive perspective at the optimal value of $K$. 
For the \texttt{poisson} and \texttt{kiri} problems, the worst performing value of $K$ for the active subspace method is superior to the best performing value of $K$ for the $L^2$ method.

\subsection{Bayesian Optimization}

Bayesian Optimization \citep[BO;][]{jones1998efficient, garnett2023bayesian} has been classically viewed as a method appropriate only for mild dimensional input spaces \citep{binois2022survey}, and thus it may be unsurprising that its application to function spaces has been somewhat limited in the academic literature.
However, building on \citet{wang2016bayesian} working in the finite dimensional case,
\citet{vellanki2019bayesian,shilton2020sequential} propose to methods to conduct Bayesian optimization on spaces of functions by searching within random finite dimensional subspaces.
These subspaces are generated as the span of a randomly drawn set of functions $m_1, \ldots, m_R \sim \rho$ from some measure on $\Hcal$.
This choice of $\rho$, together with the target function, implicitly defines an active subspace operator.
We will propose to use as a basis for the search space  not $\{m_1, \ldots, m_R\}$ as in previous approaches, but rather the leading eigenbasis of $\Ch = \sum_{r=1}^R \nabla f(m_r)\otimes \nabla f(m_r)$. 

We define a new function $g:[-1,1]^R\to\mathbb{R}$ as:
\begin{equation}
    g(\mathbf{c}) = \sum_{i=1} \frac{c_i}{\ell} q_i \,,
\end{equation}
where $q_i$ are either the randomly generated functions from $\rho$ (the \texttt{Rand} method) or the leading $R$ eigenfunctions of $\Ch$ (the \texttt{ASM} method).
This gives us a function defined on a finite dimensional cube which makes it compatible with standard BO machinery. 
We use in particular the BO framework Ax \citep{olson2025ax}.
We sampled $\Ni =10$ random initial functions $m_1, \ldots, m_{\Ni}$ from $\rho$ and then projected them onto $\{q_1, \ldots, q_R\}$.
The coefficients can be computed by $(Q^*Q)^{-1}Q^* M $, where $Q^*Q$ is the gram matrix of $\{q_1, \ldots, q_R\}$ and $Q^*M$ is the matrix of inner products between the two sets of functions.
The scalar $\ell$ is 1.5 times the maximum absolute value of any coefficient.

We then evaluated the objective function $g$ at the $\Ni$ projected initial functions. 
Although for each repetition we use the same initial points, the initial design for the finite dimensional BO subroutine are different because the projection of this initial set onto the respective bases differs.
After the $\Ni$ initial random functions, we used an additional 40 sequential evaluations chosen by the BO subroutine using expected improvement.
We conducted 100 repetitions with different random seeds for each function and method.

We present the $10^{th}, 50^{th}$ and $90^{th}$ percentiles of the optimization progress in Figure \ref{fig:bo}.
On the \texttt{poisson} problem, the \texttt{Rand} method is barely able to make progress, while the \texttt{ASM} method quickly finds the global optimum (we show only the first 20 iterations to better illustrate the difference).
On the \texttt{laminar} problem, the initial set of functions generated by the random projection actually generates better starting output values, but in the sequential design phase the \texttt{ASM} method rapidly surpasses the \texttt{Rand}.
On the \texttt{kiri} problem, both methods quickly find that the zero function gives a decent solution (as the midpoint of the space, Ax specifically checks this point at initialization), but the \texttt{ASM} method much more quickly discovers improved solutions.
Overall, the active subspace approach quantitively surpasses the random subspace method.

\begin{figure}
    \centering
    \includegraphics[width=0.3\linewidth,trim={1.15em 1em 1.0em 1em},clip]{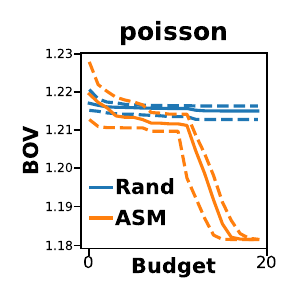}
    \includegraphics[width=0.3\linewidth,trim={1.15em 1em 1.0em 1em},clip]{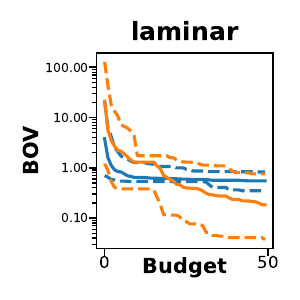}
    \includegraphics[width=0.3\linewidth,trim={1.15em 1em 1.0em 1em},clip]{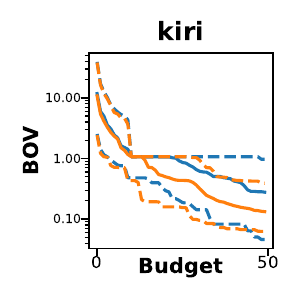}
    \caption{\textbf{Functional Optimization in Active Subspace.}
    Compares searching a random vs active subspace; y-axis gives Best Observed Value.
    Solid line gives median and dotted $10^{th}$ and $90^{th}$ percentile.
    }
    \label{fig:bo}
\end{figure}

\section{Discussion}

\paragraph{Summary and Conclusions.}
In this article, we introduced an extension of the active subspace matrix to infinite dimensions, established its theoretical properties, and developed a computable approximation.
We found that the active subspace method can be a powerful tool for developing finite dimensional approximations of functionals on a Hilbert space and dominated existing approaches on our complex test functions.

\paragraph{Future Work.}
In the decade since the active subspace method has swept computational engineering, authors have proposed a number of extensions and improvements.
For instance, several authors have proposed active subspaces for vector-valued functions
\citep{zahm2020gradient,tripathy2019deep,edeling2023deep,musayeva2024shared,rumsey2025co}
which could be ported to our infinite dimensional input setting.
\citet{lam2020multifidelity} consider active subspaces for multifidelity models.
Several authors have proposed strategies for estimating active subspaces without gradients using statistical models \citep[e.g.][]{wycoff2021sequential,rumsey2024discovering}, and extending these to the infinite dimensional setting would expand the applicability of this method.
Finally, \citet{lee2019modified} propose a modification of the active subspace matrix which emphasizes the mean gradient, and it would be interesting to study a functional analog.

\bibliography{main}

\end{document}